\documentclass[11pt]{article}

\usepackage[letterpaper,margin=1in]{geometry}

\usepackage{amsmath,amssymb,amsfonts,amsthm}

\usepackage{algorithm,algpseudocode}
\usepackage{booktabs}
\usepackage{graphicx}
\usepackage{enumitem}

\usepackage[round,sort&compress,authoryear]{natbib}

\usepackage{xcolor}
\usepackage{url}
\usepackage[colorlinks=true,linkcolor=blue!50!black,
            citecolor=blue!50!black,urlcolor=blue!50!black]{hyperref}

\theoremstyle{plain}
\newtheorem{theorem}{Theorem}
\newtheorem{lemma}{Lemma}
\newtheorem{corollary}{Corollary}

\usepackage[compact]{titlesec}
\titlespacing{\section}{0pt}{10pt}{5pt}
\titlespacing{\subsection}{0pt}{8pt}{4pt}
\titlespacing{\subsubsection}{0pt}{6pt}{3pt}

\title{DCD: Decomposition-based Causal Discovery from\\
       Autocorrelated and Non-Stationary Temporal Data}

\usepackage{authblk}  

\author[]{Muhammad Hasan Ferdous\thanks{\texttt{h.ferdous@umbc.edu}}}
\author[]{Md Osman Gani\thanks{\texttt{mogani@umbc.edu}}}
\affil[]{Causal AI Lab, Department of Information Systems\\
         University of Maryland, Baltimore County\\
         Baltimore, MD, USA}

\date{}

\begin{document}

\maketitle

\begin{abstract}
Multivariate time series in domains such as finance, climate science, and healthcare often exhibit long-term trends, seasonal patterns, and short-term fluctuations, which complicate causal inference under non-stationarity and autocorrelation. Most causal discovery methods operate on raw observations and are therefore susceptible to spurious edges and misattributed temporal dependencies. We introduce a decomposition-based causal discovery (DCD) framework that separates each series into trend, seasonal, and residual components and performs component-specific causal analysis: stationarity tests for trends, kernel-based dependence tests for seasonality, and constraint-based causal discovery for residuals. The resulting component-level graphs are integrated into a unified multi-scale causal structure that distinguishes time-proxy edges (exogenous temporal drivers) from mechanistic edges (variable-to-variable dependence). We establish identifiability guarantees under spectral separability and bounded leakage, and quantify the structural error in terms of a leakage parameter $\varepsilon$ and the Type-I error of the downstream causal test. Across synthetic benchmarks and real-world climate and electricity datasets, DCD recovers ground-truth causal structure more accurately than state-of-the-art baselines under strong non-stationarity and autocorrelation, and degrades gracefully when spectral separability is partially violated. Code is available at \url{https://github.com/noname2122/DCD-TMLR-2025}.
\end{abstract}

\section{Introduction}

Multivariate time series are fundamental across domains such as economics, environmental science, finance, and healthcare, where understanding causal relationships is central to forecasting, decision-making, and policy design. Such data typically contain long-term trends, seasonal patterns, and short-term residuals, and identifying causal relationships within and across these components is essential for interpreting economic indicators, climate variability, and health outcomes. Existing causal discovery (CD) methods often struggle to disentangle these temporal dependencies, which can produce inaccurate causal attributions and spurious relationships.

Many algorithms do not reliably distinguish between long-term causal effects, seasonal dependencies, and transient interactions, which limits their ability to recover an accurate causal structure~\citep{granger1969investigating, spirtes_causation, runge2020discovering}. Deep learning architectures have advanced time series forecasting by capturing long-term dependencies, but their objectives remain correlational~\citep{liu2024generative, piao2024fredformer, wen2022robust}. As a result, the gap between predictive modeling and causal understanding persists, particularly in separating spurious correlations from causal influence~\citep{vaswani2017attention, wu2021autoformer, dhaou2021causal}.

To address these challenges, we propose a \textbf{decomposition-based causal discovery (DCD) framework}. By decomposing each time series into trend, seasonal, and residual components, the framework enables targeted causal discovery at multiple temporal scales, reducing spurious correlations and exposing causal structure that is obscured in raw data. Specifically, DCD applies \textbf{stationarity tests}~\citep{kwiatkowski1992testing} to the trend component to assess long-term dependencies, \textbf{kernel-based dependence measures}~\citep{scholkopf2002learning} to the seasonal component to identify cyclic structure, and \textbf{constraint-based causal discovery}~\citep{runge2020discovering} to the residual component to identify short-term causal effects. The component-level graphs are integrated into a unified causal graph capturing both long- and short-term causal relationships. Our contributions are as follows:
\begin{itemize}
    \item \textbf{Decomposition-based causal discovery.} A structured framework that separates trend, seasonal, and residual components and performs targeted inference at each temporal scale.
    \item \textbf{Identifiability analysis.} An explicit treatment of decomposition as spectral filtering, with identifiability guarantees under linear Gaussian dynamics and an SHD bound in terms of the leakage parameter $\varepsilon$ and Type-I error $\alpha_n$.
    \item \textbf{Empirical validation.} Evaluation on synthetic benchmarks and two real-world datasets shows superior recovery of causal structure relative to PCMCI+~\citep{runge2020discovering}, CD-NOD~\citep{huang2020causal}, DYNOTEARS~\citep{pamfil2020dynotears}, and additional constraint-based and score-based baselines. Sensitivity and isolation studies show that the gains arise from multi-scale integration, not from STL preprocessing alone.
\end{itemize}

\section{Related Work}

This work intersects three areas: time series decomposition, causal discovery in time series, and the integration of decomposition with causal inference.

\textbf{Time series decomposition and representation learning.}
Classical decomposition techniques separate a time series into trend, seasonal, and irregular components to simplify analysis~\citep{stl}. STL and moving-average methods~\citep{stl, slutzky1937summation} assume relatively smooth or linear behavior and can struggle in multivariate or nonlinear settings. Signal-processing approaches such as Empirical Mode Decomposition (EMD) and Variational Mode Decomposition (VMD)~\citep{emd, vmd} extract intrinsic modes that capture non-stationary and oscillatory patterns, but are oriented toward frequency analysis rather than causal structure. Deep architectures such as Transformers~\citep{vaswani2017attention}, Autoformer~\citep{wu2021autoformer}, Temporal Fusion Transformers~\citep{lim2021temporal}, and N-BEATS~\citep{oreshkin2019nbeats} learn multi-scale representations, but attention weights and learned basis blocks do not themselves identify causal structure.

\textbf{Causal discovery in time series.}
Causal discovery from observational time series is complicated by autocorrelation, temporal dependencies, and non-stationarity~\citep{runge2019inferring}. Granger causality~\citep{granger1969investigating, zhang2024learning} relies on predictive improvement and assumes linearity and stationarity. Constraint-based methods such as PC and FCI~\citep{peters_causal} and RFCI~\citep{colombo2012learning} use conditional independence tests and can produce spurious edges under high-dimensional autocorrelation. Greedy score-based methods such as GES and FGES~\citep{GES, fges} face related challenges on dynamic data. Time-series-specific methods such as PCMCI+~\citep{runge2020discovering}, DYNOTEARS~\citep{pamfil2020dynotears}, and CDANs~\citep{Cdans} distinguish lagged and contemporaneous effects, but do not explicitly decompose the series into temporal scales and can miss causal relationships masked by trends or seasonal structure.

A gap remains in integrating decomposition with causal inference on multivariate time series. By separating trend, seasonal, and residual components and inferring causal relationships within each, spurious links induced by strong seasonality or drifting trends are reduced. Our work closes this gap with a unified pipeline that combines the interpretability of decomposition with causal discovery across scales.

\section{Theoretical Framework and Methodology}
\label{sec:methodology}

We first formalize causal discovery in non-stationary multivariate time series and state conditions under which the causal structure is identifiable. We then present the DCD framework, which is designed to satisfy these conditions through spectral separation and component-specific inference.

\subsection{Theoretical Guarantees and Identifiability}
\label{sec:theory}

We model the multivariate time series $\mathbf{X}(t) \in \mathbb{R}^d$ as a superposition of stochastic processes operating at separable timescales. We state conditions under which component-wise causal discovery recovers the true edges of $G^\star$ and quantify the error induced by imperfect decomposition (leakage).

\subsubsection{Formal Problem Setup}

Let $\mathbf{X}(t) = \{X_1(t), \dots, X_d(t)\}$ be a multivariate stochastic process indexed by $t \in \mathbb{Z}$. We assume the data-generating process follows a structural causal model (SCM) adapted to the additive decomposition
\begin{equation}
    X_i(t) = T_i(t) + S_i(t) + R_i(t), \qquad i \in \{1, \dots, d\},
\end{equation}
where $T_i, S_i, R_i$ denote the trend, seasonal, and residual components. The edge set of the true causal graph $G^\star$ decomposes as
\begin{equation}
    E^\star = E^\star_T \cup E^\star_S \cup E^\star_R,
\end{equation}
where $E^\star_T$ and $E^\star_S$ are \textbf{time-proxy edges} governing non-stationary drift and cyclical dependencies, and $E^\star_R$ governs short-term interactions. We treat $t \rightarrow X$ as a proxy for exogenous temporal drivers (e.g.\ climate cycles), which prevents shared temporal influence from being misidentified as a direct causal link between variables.

\subsubsection{Assumptions}

\noindent\textbf{A1. Causal invariance and asymptotic independence.}
The structural mechanism governing $\mathbf{R}(t)$ is time-invariant over the analysis window. For consistency of conditional independence estimators from finite samples, we require $\mathbf{R}(t)$ to satisfy a $\beta$-mixing condition: letting $\mathcal{F}_{-\infty}^t$ and $\mathcal{F}_{t+k}^{\infty}$ be the $\sigma$-algebras generated by $\{\mathbf{R}(s) : s \le t\}$ and $\{\mathbf{R}(s) : s \ge t+k\}$,
\begin{equation}
    \beta(k) = \sup_{t} \mathbb{E} \left[ \sup_{A \in \mathcal{F}_{t+k}^{\infty}} \left| P(A \mid \mathcal{F}_{-\infty}^t) - P(A) \right| \right],
\end{equation}
with $\beta(k) \to 0$ as $k \to \infty$. This guarantees that the effective sample size grows with $n$ and permits valid statistical inference without strict stationarity~\citep{rio2017asymptotic,merlevede2011bernstein}.

\begin{figure*}[!t]
    \centering
    \includegraphics[width=0.85\textwidth]{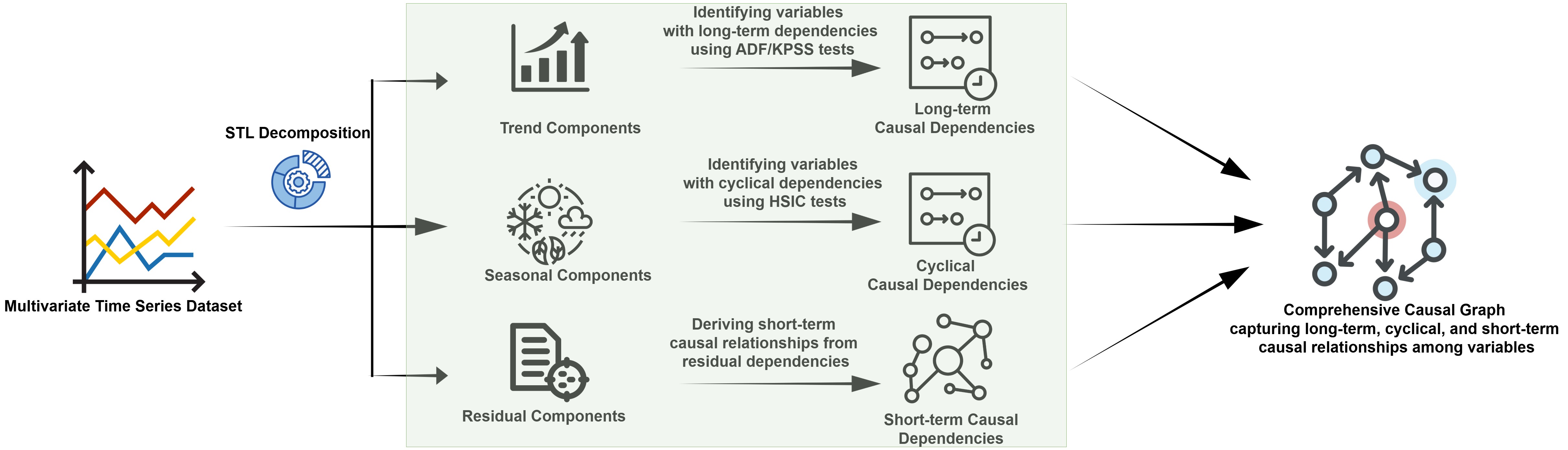}
    \vspace{-6pt}
    \caption{Overview of the DCD framework. The multivariate time series is decomposed into trend, seasonal, and residual components using STL. ADF/KPSS tests identify variables with long-term dependencies, HSIC tests identify variables with cyclical dependencies, and constraint-based search derives short-term causal relationships. The component-level graphs are integrated into a multi-scale causal graph.}
    \label{fig:DCD_framework}
\end{figure*}

\noindent\textbf{A2. Spectral separability.}
The components have disjoint support in the frequency domain. Let $f_Y(\omega)$ denote the spectral density of a process $Y$. We assume cutoff frequencies $\omega_1 < \omega_2$ such that
\begin{align}
    \mathrm{supp}(f_T) &\subset [0, \omega_1), \\
    \mathrm{supp}(f_S) &\subset \bigcup_{k} \left[\tfrac{2\pi k}{P} - \delta, \tfrac{2\pi k}{P} + \delta\right], \\
    \mathrm{supp}(f_R) &\subset (\omega_1, \pi] \setminus \mathrm{supp}(f_S).
\end{align}
This justifies frequency-selective decomposition (such as STL) for isolating components~\citep{stl}.

\noindent\textbf{A3. Linear Gaussian dynamics and bounded leakage.}
To obtain identifiability bounds where mutual information is upper-bounded by covariance, we assume for the theoretical analysis that the data-generating process follows a Linear Gaussian Structural Equation Model (LG-SEM). The leakage parameter $\varepsilon$ is defined as the maximum cross-component correlation,
\begin{equation}
    \lambda = \max_{i,j} \left\{
    \frac{|\mathrm{Cov}(\widehat{T}_i, \widehat{R}_j)|}{\sigma_{\widehat{T}_i}\sigma_{\widehat{R}_j}},
    \frac{|\mathrm{Cov}(\widehat{S}_i, \widehat{R}_j)|}{\sigma_{\widehat{S}_i}\sigma_{\widehat{R}_j}}
    \right\} \leq \varepsilon, \qquad \varepsilon \ll 1.
\end{equation}
Under Gaussianity this bounds the mutual information between components.

\noindent\textbf{A4. Causal modularity.}
Cross-scale causal influences are negligible or blocked by the decomposition: $R_i(t-\tau) \not\rightarrow T_j(t)$ and $R_i(t-\tau) \not\rightarrow S_j(t)$ for any lag $\tau$.

\noindent\textbf{Remark on sufficient conditions.} Assumptions A1--A4 are sufficient, not necessary. The sensitivity study in Appendix~\ref{app:spectral_sensitivity} shows that DCD performance degrades gracefully, rather than catastrophically, when spectral separability is partially violated through amplitude modulation.

\noindent\textbf{Remark on non-linear extensions.} Although Assumption A3 invokes linear Gaussianity to derive Theorem~\ref{thm:identifiability}, the DCD pipeline itself is modular. STL filters non-linear trends through LOESS smoothing, and the residual-level stage can be instantiated with non-parametric CI tests such as CMI-knn or HSIC. Assumption A3 is thus a sufficient condition for the stated theoretical bound; the experiments in Section~\ref{sec:experiments} include non-linear climate and industrial settings.

\subsubsection{Identifiability Analysis}

Under the assumptions above, component-wise causal discovery yields a graph $\widehat{G}$ that converges to $G^\star$ up to an error term dominated by $\varepsilon$.

\begin{lemma}[Projection of Seasonal Influence]
\label{lem:projection}
Consider a causal pathway $S_X(t) \to Y(t)$ where $S_X$ is a seasonal driver. Under Assumption A3, the decomposition of $Y(t)$ projects this influence onto $S_Y(t)$, orthogonal to the residual subspace $R_Y(t)$ up to order $\varepsilon$:
\begin{equation}
    R_Y(t) \perp\!\!\!\perp S_X(t) \mid S_Y(t) + \mathcal{O}(\varepsilon).
\end{equation}
\end{lemma}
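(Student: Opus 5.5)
The plan is to work entirely within the jointly Gaussian setting of A3 and use the fact that conditional independence is equivalent to a vanishing partial covariance. The $\mathcal{O}(\varepsilon)$ statement will be made precise as a bound $|\mathrm{Cov}(R_Y, S_X \mid S_Y)|/(\sigma\sigma) = \mathcal{O}(\varepsilon)$. Equivalently, the conditional mutual information $I(R_Y; S_X \mid S_Y) = -\tfrac12\log(1-\rho^2_{R_Y S_X\cdot S_Y}) = \mathcal{O}(\varepsilon^2)$. Concretely, I would first write the linear pathway as $Y(t) = \sum_\tau b_\tau S_X(t-\tau) + \text{(other terms)}$. Because $S_X$ is spectrally supported in the seasonal bands of A2 and linear time-invariant filtering preserves spectral support, the contribution $b * S_X$ also lives in $\bigcup_k[2\pi k/P-\delta, 2\pi k/P+\delta]$. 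An ideal frequency-selective decomposition therefore assigns all of it to $S_Y$. The residual $R_Y$, supported on a disjoint band, receives none of it, and in the ideal case the cross-spectrum $f_{R_Y S_X}$ vanishes identically.

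Next I would pass from the ideal to the estimated components. Write $\widehat S_Y = S_Y + \Delta_S$ and $\widehat R_Y = R_Y + \Delta_R$, where the leakage terms $\Delta$ are what STL misallocates. By A3, the correlations $\mathrm{corr}(\widehat S_i,\widehat R_j)$ are bounded by $\varepsilon$. The partial correlation formula gives
\begin{equation*}
\rho_{\widehat R_Y S_X\cdot \widehat S_Y} = \frac{\rho_{\widehat R_Y S_X} - \rho_{\widehat R_Y \widehat S_Y}\rho_{S_X \widehat S_Y}}{\sqrt{(1-\rho^2_{\widehat R_Y \widehat S_Y})(1-\rho^2_{S_X\widehat S_Y})}}.
\end{equation*}
The second term in the numerator is at most $\varepsilon$ in absolute value, since $\rho_{\widehat R_Y\widehat S_Y}\le\varepsilon$ and $|\rho_{S_X\widehat S_Y}|\le 1$. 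The first term is exactly the cross-component correlation between a seasonal component (of $X$, or its estimate $\widehat S_X$ up to leakage) and a residual component, so A3 also bounds it by $\varepsilon$, plus a term controlled by the same leakage when replacing $S_X$ by $\widehat S_X$.

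The main obstacle is the denominator, specifically the factor $1-\rho^2_{S_X\widehat S_Y}$. When the seasonal pathway is nearly deterministic, $S_X$ and $\widehat S_Y$ can be almost perfectly correlated, and the partial correlation then amplifies the $\mathcal{O}(\varepsilon)$ numerator. I would handle this in one of two ways. The first option is to assume a non-degeneracy condition, namely that $S_Y$ carries its own seasonal innovation or has additional seasonal parents, so that $|\rho_{S_X \widehat S_Y}|\le 1-\eta$ for some fixed $\eta>0$; the constant in $\mathcal{O}(\varepsilon)$ would then be $2/\eta$. The second option is to state the lemma in regression form: the residual of projecting $\widehat R_Y$ onto $\mathrm{span}(\widehat S_Y)$ has covariance with $S_X$ of order $\varepsilon\,\sigma_{R}\sigma_{S}$. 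That form needs no denominator, and I would adopt it as the primary formulation, interpreting the stated $\mathcal{O}(\varepsilon)$ accordingly.

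Finally, I would note that A4 is not needed for this direction. The lemma concerns the flow from seasonal to observed variables, not from residual to seasonal components. Lagged versions $S_X(t-\tau)$ are handled identically, because the argument in the first step is spectral and therefore lag-invariant.
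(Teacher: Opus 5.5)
Your proposal follows the same skeleton as the paper's proof. A2 places the (linearly filtered) seasonal driver in the seasonal band, so an ideal decomposition routes it entirely into $S_Y$. A3 then bounds what leaks into $\widehat R_Y$, and Gaussianity turns small correlation into near-independence.

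Where you differ is the last step. The paper bounds only the \emph{marginal} covariance, via $|\mathrm{Cov}(R_Y,S_X)|\le C\,|\mathrm{Cov}(R_Y,S_Y)|\le\varepsilon\,\sigma_{R_Y}\sigma_{S_Y}$ with an unexplained constant $C$, and then asserts $I(R_Y;S_X\mid S_Y)\to 0$. You carry out the conditioning through the partial-correlation formula. That exposes the factor $1-\rho^2_{S_X\widehat S_Y}$, which the paper never addresses.

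Your worry about that factor is well founded. Take $S_Y=aS_X$, $\widehat S_Y=S_Y+\Delta$, $\widehat R_Y=R_Y-\Delta$, with $\Delta$ independent Gaussian noise of small variance $d^2$ and all other components estimated exactly. Then A3 holds with $\varepsilon\approx d^2/(|a|\,\sigma_{S_X}\sigma_{R_Y})$. Yet the partial correlation of $\widehat R_Y$ and $S_X$ given $\widehat S_Y$ has magnitude $\approx d/\sigma_{R_Y}$, i.e.\ of order $\sqrt{\varepsilon}$.

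So the paper's qualitative ``$\to 0$'' survives in this example. Your sharper target (partial correlation $\mathcal{O}(\varepsilon)$, CMI $\mathcal{O}(\varepsilon^2)$), however, genuinely needs your non-degeneracy condition; in the paper's own simulation that condition hinges on the innovation $\eta_S$. Your regression-form conclusion is essentially what the paper's covariance bound actually delivers.

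The one step that neither you nor the paper derives from A3 is the bound on the first numerator term $\rho_{\widehat R_Y S_X}$. A3 constrains correlations among \emph{estimated} components only, so it does not control the replacement cost $\mathrm{Cov}(\widehat R_Y,\,S_X-\widehat S_X)$. From $X=\widehat T_X+\widehat S_X+\widehat R_X$ and A3 one only gets $\mathrm{Cov}(S_X,\widehat R_Y)=\mathrm{Cov}(\widehat R_X-R_X,\widehat R_Y)-\mathrm{Cov}(T_X,\widehat R_Y)+\mathcal{O}(\varepsilon)$, and A3 says nothing about either of the first two terms.

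You should state this as an explicit strengthening of A3. For example, require leakage to be bounded against the true seasonal drivers, or require seasonal estimation errors to be nearly uncorrelated with estimated residuals. The paper hides the same assumption in its constant $C$.

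Similarly, A3 is a lag-$0$ condition. Your remark that lagged $S_X(t-\tau)$ is ``handled identically'' is true of the ideal spectral step but not of the leakage step, which would need a lagged version of A3.
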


\begin{proof}
Let $Y(t) = f(S_X(t)) + \eta(t)$, where $\eta(t)$ gathers other independent influences. Because $S_X(t)$ has fundamental frequency $\omega_0$ (Assumption A2), $f(S_X(t))$ lies in the seasonal band (possibly including harmonics). The decomposition operator $\mathcal{D}$ partitions $Y(t)$ by frequency; by A2, the energy of $f(S_X(t))$ is absorbed by the seasonal estimator $\widehat{S}_Y$ so that $\widehat{S}_Y(t) \approx f(S_X(t))$ and $\widehat{R}_Y(t) \approx \eta(t)$. Finite-sample spectral leakage causes a small fraction of seasonal energy to spill into the residual band; by A3, this covariance is bounded by $\varepsilon$, giving
\[
|\mathrm{Cov}(R_Y(t), S_X(t))| \leq |\mathrm{Cov}(R_Y(t), S_Y(t))| \cdot C \leq \varepsilon \cdot \sigma_{R_Y}\sigma_{S_Y}.
\]
The conditional mutual information $I(R_Y; S_X \mid S_Y) \to 0$ as $\varepsilon \to 0$, giving the stated conditional independence up to $\mathcal{O}(\varepsilon)$.
\end{proof}

\begin{lemma}[Weak Dependence of Residuals]
\label{lem:weak_dependence}
Let $X(t) = T(t) + S(t) + R(t)$. Under Assumptions A2 and A3, the estimated residual $\widehat{R}(t)$ satisfies the $\beta$-mixing condition of A1 with deviation bounded by $\varepsilon$.
\end{lemma}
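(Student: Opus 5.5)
The plan is to write the estimated residual as the true residual plus a small perturbation, $\widehat{R}(t) = R(t) + \Delta(t)$, with
\[
\Delta(t) = (T(t)-\widehat{T}(t)) + (S(t)-\widehat{S}(t)).
\]
I would then show that $\Delta$ contributes at most $\mathcal{O}(\varepsilon)$ to the mixing coefficients. Following the convention of Lemma~\ref{lem:projection}, I treat the decomposition operator $\mathcal{D}$ as a linear, frequency-selective filter. By A2, the residual band $(\omega_1,\pi]\setminus\mathrm{supp}(f_S)$ is disjoint from the trend and seasonal supports. Hence $\Delta$ consists only of the leakage energy that the filter passes from the trend and seasonal bands into the residual band.

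\textbf{Step 1: covariance bound on the leakage.} Under A3, the normalized cross-covariances between $\widehat{R}$ and $\widehat{T}$, and between $\widehat{R}$ and $\widehat{S}$, are at most $\varepsilon$. Combined with A2, this gives $\mathrm{Var}(\Delta(t)) \le C\varepsilon^2 \sigma_R^2$, or at worst $C\varepsilon\,\sigma_R^2$. Consequently, the autocovariance sequences satisfy
\[
|\gamma_{\widehat{R}}(k) - \gamma_R(k)| \le C\varepsilon\,\sigma_R^2
\]
uniformly in $k$.

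\textbf{Step 2: from covariances to $\beta$-mixing.} Under the LG-SEM of A3, all processes are jointly Gaussian. For Gaussian vectors, the total variation distance between a joint law and the product of its marginals is controlled by the maximal canonical correlation between the past and future blocks, for example through Pinsker's inequality together with the Gaussian mutual information $-\tfrac12\log\det(I-\rho\rho^\top)$. This gives
\[
\beta_{\widehat{R}}(k) \le \beta_R(k) + C\,\rho_\Delta(k),
\]
where $\rho_\Delta(k)$ is the extra past/future correlation introduced by $\Delta$. I would bound $\rho_\Delta(k)$ by $\mathcal{O}(\varepsilon)$ using Step 1.

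\textbf{Step 3: assemble the conclusion.} Combining Steps 1 and 2 with A1, which gives $\beta_R(k)\to 0$, yields
\[
\limsup_{k\to\infty} \beta_{\widehat{R}}(k) \le C\varepsilon.
\]
This is the stated $\beta$-mixing with deviation bounded by $\varepsilon$.

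\textbf{Main obstacle.} The hardest part is Step 2. Trend and seasonal components are highly persistent, and seasonal components are not mixing at all, since a pure sinusoid has $\beta(k)\not\to 0$. So the leaked part $\Delta$ carries long-memory correlation. Its contribution therefore does not vanish with $k$; it only stays small in amplitude. This is precisely why the conclusion can only be a deviation of order $\varepsilon$ rather than exact mixing.

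Bounding the past/future canonical correlation of the infinite-dimensional $\sigma$-algebras by a pointwise covariance bound also needs care. I would first try to bound the spectral density of $\Delta$ in sup-norm by $\varepsilon$ times that of $R$, and then invoke the Kolmogorov--Rozanov relation between maximal correlation and spectral approximation. Failing that, I would restrict the claim to finite-dimensional blocks of length $m$, which is sufficient for the CI tests used downstream, and accept a constant depending on $m$.
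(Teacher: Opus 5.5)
Your Steps 1 and 3 are the paper's argument. The paper writes $\widehat{R} = X - \widehat{T} - \widehat{S}$, reads A2/A3 as saying the leaked variance is $\mathcal{O}(\varepsilon)$, and concludes $\gamma_{\widehat{R}}(k) = \gamma_R(k) + \mathcal{O}(\varepsilon)$. It then simply asserts that, because $\gamma_R$ decays exponentially, $\widehat{R}$ ``behaves as a mixing process up to the noise floor $\varepsilon$''; it never passes from covariances to $\beta$-coefficients.

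Your Step 2 attempts that passage, and it fails for a structural rather than a technical reason. $\beta$-coefficients depend only on the generated $\sigma$-algebras, and canonical correlations are normalized, so an $\mathcal{O}(\varepsilon)$ bound on the \emph{amplitude} of the leak says nothing about either. Take exactly the persistent leak you describe: $\widehat{R} = R + \varepsilon D$ with $D_t = a\cos\omega t + b\sin\omega t$, $\omega \in (0,\pi)$, and $a,b$ i.i.d.\ $\mathcal{N}(0,\sigma_R^2)$ independent of a $\beta$-mixing Gaussian $R$. This is compatible with A3: take $S = D$ and $\widehat{S} = (1-\varepsilon)D$. Step 1 holds, indeed with $\gamma_{\widehat{R}}(k) - \gamma_R(k) = \mathcal{O}(\varepsilon^2)$. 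But averages of $e^{-i\omega s}\widehat{R}_s$ over ever longer stretches of the past annihilate $R$, which has no spectral atom at $\omega$, and converge to $\varepsilon(a-ib)/2$; the same averages over the future do too. So $(a,b)$ is measurable with respect to both $\mathcal{F}_{-\infty}^{t}$ and $\mathcal{F}_{t+k}^{\infty}$. The maximal past--future canonical correlation is then $1$, and $\beta_{\widehat{R}}(k) = 1$ for every $k$ and every $\varepsilon>0$. Hence ``bound $\rho_\Delta(k)$ by $\mathcal{O}(\varepsilon)$ using Step 1'' is false, and so is $\limsup_k \beta_{\widehat{R}}(k) \le C\varepsilon$ for the infinite-$\sigma$-algebra coefficients of A1. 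Your obstacle paragraph correctly notices that the leak is non-mixing, but then assumes small amplitude means small $\beta$, which is exactly what breaks. The Kolmogorov--Rozanov route cannot rescue this either. The leaked energy cannot be dominated by $\varepsilon f_R$: it is a spectral atom, or, under A2, it sits in the trend and seasonal bands where $f_R \equiv 0$.

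What does go through is your finite-block fallback, and that is the honest content of both your argument and the paper's. Let $\beta^{(m)}_{\widehat{R}}(k)$ be the coefficient between Gaussian past and future blocks of length $m$ at gap $k \ge m$. It equals the total-variation distance between the joint law and the product of marginals. Pinsker together with $I = -\tfrac12\sum_i \log(1-\rho_i^2)$ then gives, once all canonical correlations are at most $1/2$,
\[
\beta^{(m)}_{\widehat{R}}(k) \le \frac{\|\Sigma_{PF}\|_F}{\sqrt{3}\,\lambda_{\min}(\widehat{\Sigma}_m)} \le \frac{m}{\sqrt{3}\,\lambda_{\min}(\widehat{\Sigma}_m)}\Bigl(\max_{\ell > k-m}|\gamma_R(\ell)| + C\varepsilon\Bigr).
\]
Here $\Sigma_{PF}$ is the past--future cross-covariance block and $\widehat{\Sigma}_m$ the block covariance of $\widehat{R}$, with $\lambda_{\min}(\widehat{\Sigma}_m) \ge \lambda_{\min}(\Sigma_m) - Cm\varepsilon$. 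Three caveats apply.
\begin{enumerate}
\item The bound needs the uniform $\mathcal{O}(\varepsilon)$ autocovariance estimate, hence $\mathrm{Var}(\Delta) = \mathcal{O}(\varepsilon^2)$. Your ``at worst $C\varepsilon\sigma_R^2$'' branch yields only $\mathcal{O}(\sqrt{\varepsilon})$ through the cross terms $\mathrm{Cov}(R_t,\Delta_{t+k})$.
\item A3 bounds correlations between \emph{estimated} components, so it does not by itself give $\mathrm{Var}(\Delta) = \mathcal{O}(\varepsilon^2)$. If the estimator moves a trend sub-band $T_2$, uncorrelated with everything else, into the residual, then $\widehat{R} = R + T_2$ and A3 holds with $\varepsilon = 0$. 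The paper makes this leap as well.
\item The prefactor must blow up with $m$. Read literally, A2 makes $f_R$ vanish on $[0,\omega_1]$, so $\lambda_{\min}(\Sigma_m) \to 0$. By Kolmogorov--Szeg\H{o}, a Gaussian $R$ obeying A2 is then linearly deterministic and could not satisfy A1 in the first place. This is one more reason only the finite-block statement is attainable.
\end{enumerate}
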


\begin{proof}
The raw process $X(t)$ generally violates A1: $T(t)$ exhibits long memory and $S(t)$ exhibits non-decaying periodic correlations. The residual estimator is $\widehat{R}(t) = X(t) - \widehat{T}(t) - \widehat{S}(t)$. By A2, $\widehat{T}$ and $\widehat{S}$ capture the spectral energy of the non-mixing components. By A3, the variance leaking into $\widehat{R}$ is bounded by $\varepsilon$, so $\gamma_{\widehat{R}}(k) \approx \gamma_{R}(k) + \mathcal{O}(\varepsilon)$. Because the true mechanism $R(t)$ is generated by short-term interactions (A1), its autocovariance decays exponentially, and $\widehat{R}(t)$ therefore behaves as a mixing process up to the noise floor $\varepsilon$.
\end{proof}

\begin{corollary}[Seasonal-to-residual removal]
\label{corr:seasonal-removal}
If the true system contains a causal dependency $S_X(t) \to Y(t)$, then under A1--A4 this effect appears in $S_Y(t)$ rather than $R_Y(t)$ after decomposition.
\end{corollary}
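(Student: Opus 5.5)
The corollary is essentially a restatement of Lemma~\ref{lem:projection} in the language of the decomposition output, so I will derive it from that lemma. The other assumptions enter only to make sure nothing else re-routes the seasonal effect into the residual. I will model the dependency as $Y(t) = f(S_X(t)) + \eta(t)$, as in the proof of Lemma~\ref{lem:projection}, where $\eta$ collects trend and residual contributions independent of $S_X$.

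\textbf{Step 1: the seasonal contribution lives in the seasonal band.} Since $S_X$ is periodic with period $P$, the process $f(S_X(t))$ is also $P$-periodic. Its spectrum is therefore supported on the harmonics $2\pi k/P$. By A2, these lie in $\mathrm{supp}(f_S)$, which is disjoint from both $\mathrm{supp}(f_T)$ and $\mathrm{supp}(f_R)$. Hence the frequency-selective decomposition $\mathcal{D}$ assigns $f(S_X)$ to $\widehat{S}_Y$, up to finite-sample leakage.

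\textbf{Step 2: quantify what leaks into $\widehat{R}_Y$.} By A3 the leakage is bounded: the correlation between $\widehat{S}_Y$ and $\widehat{R}_Y$ is at most $\varepsilon$. Invoking Lemma~\ref{lem:projection}, this gives $\widehat{R}_Y \perp\!\!\!\perp S_X \mid \widehat{S}_Y$ up to $\mathcal{O}(\varepsilon)$. Under Gaussianity, the conditional mutual information $I(\widehat{R}_Y; S_X \mid \widehat{S}_Y) \le -\tfrac12\log(1-\varepsilon^2) = \mathcal{O}(\varepsilon^2)$. So, in the limit of exact separation, the marginal and partial correlations of $S_X$ with $\widehat{R}_Y$ vanish. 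Conversely, $\mathrm{Cov}(\widehat{S}_Y, S_X) \approx \mathrm{Cov}(f(S_X), S_X)$, which is nonzero whenever the edge is genuine and faithful. The effect therefore appears in $S_Y$.

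\textbf{Step 3: rule out indirect re-entry into the residual.} I still have to exclude the possibility that the seasonal influence reaches $R_Y$ through other variables' residuals. Here I will use A4 (no cross-scale edges into $T$ or $S$) together with the structure of the SCM: residual mechanisms are driven only by residual parents. Consequently, any path from $S_X$ to $R_Y$ must cross from the seasonal band to the residual band, and each such crossing contributes at most $\mathcal{O}(\varepsilon)$ of correlation. Lemma~\ref{lem:weak_dependence} and A1 ensure that the residual-level CI tests are valid, so this $\mathcal{O}(\varepsilon)$ dependence is below detection once $\varepsilon$ is small relative to the test threshold.

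\textbf{Main obstacle.} The hardest part is Step 3. A4 as stated forbids residual-to-seasonal influence, but not seasonal-to-residual influence. I will therefore have to read the SCM's additive decomposition as placing seasonal drivers only in the seasonal equations, and present the claim as holding up to $\mathcal{O}(\varepsilon)$ rather than exactly.
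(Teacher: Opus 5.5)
Your Steps 1--2 follow the paper's own argument: the induced effect is periodic, A2 assigns it to $\widehat S_Y$, and A3 caps what leaks into $\widehat R_Y$. Step 3 goes beyond the paper, which never treats indirect re-entry, and the gap you flag there can be closed by A2 rather than by an extra reading of the SCM. Under the linear dynamics of A3, a seasonal driver entering any residual equation would generically put spectral mass at the harmonics $2\pi k/P$, and A2 (read jointly over variables) excludes those from $\mathrm{supp}(f_R)$.

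One correction: the bound $I(\widehat R_Y; S_X \mid \widehat S_Y) \le -\tfrac12\log(1-\varepsilon^2)$ does not follow from A3. A3 bounds only marginal correlations between estimated components, but the Gaussian formula needs the partial correlation of $\widehat R_Y$ and $S_X$ given $\widehat S_Y$. The bound you can get on that from A3, via the partial-correlation formula, carries a factor $1/\sqrt{1-\rho^2(S_X,\widehat S_Y)}$, which is large exactly here because $\widehat S_Y \approx f(S_X)$. Cite Lemma~\ref{lem:projection} for the $\mathcal{O}(\varepsilon)$ statement instead of that inequality.
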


\begin{proof}
$S_X(t)$ is deterministic or cyclostationary, so any causal effect $S_X(t) \to Y(t)$ induces periodicity in $Y(t)$. The decomposition $Y(t) = T_Y(t) + S_Y(t) + R_Y(t)$ assigns periodic variability to $S_Y(t)$ (A2). The residual $R_Y(t) = Y(t) - T_Y(t) - S_Y(t)$ contains no periodic structure induced by $S_X(t)$ beyond the $\varepsilon$ leakage term (A3). The causal link is absorbed into $S_X \to S_Y$ (or $t \to Y$ in the integrated graph), and no edge $S_X \to R_Y$ appears in $G_R$.
\end{proof}

\begin{corollary}[Trend--seasonal leakage bound]
\label{corr:trend-seasonal}
If the true data-generating process contains a pathway $T_X(t) \to S_Y(t)$, then after decomposition the induced effect from $T_X$ onto $Y$ appears predominantly in $T_Y(t)$, with at most $\mathcal{O}(\varepsilon)$ contamination into $S_Y(t)$ and $R_Y(t)$.
\end{corollary}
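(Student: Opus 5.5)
The argument follows the same pattern as Lemma~\ref{lem:projection} and Corollary~\ref{corr:seasonal-removal}, with the roles of the seasonal and trend bands exchanged. Model the pathway as $Y(t) = g(T_X(t)) + \eta(t)$, where $g$ is linear under A3 and $\eta$ collects the remaining influences, which are independent of $T_X$. We then track where the spectral energy of $g(T_X)$ lands after the decomposition operator $\mathcal{D}$ is applied.

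\textbf{Step 1: spectral localization.} Under A3 the map $T_X \mapsto g(T_X)$ is a linear time-invariant filter with transfer function $H(\omega)$. Hence
\[
f_{g(T_X)}(\omega) = |H(\omega)|^2 f_{T_X}(\omega),
\]
and its support lies inside $\mathrm{supp}(f_{T_X}) \subset [0,\omega_1)$. Even though the pathway is nominally directed at $S_Y$, a linear mechanism cannot move energy from $[0,\omega_1)$ into the seasonal bands or into $(\omega_1,\pi]$. By A2 these bands are disjoint from the trend band. The induced component therefore satisfies the defining spectral condition of a trend, so the population decomposition assigns it to $T_Y$. Consequently the ideal seasonal and residual components of $Y$ are functions of $\eta$ alone.

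\textbf{Step 2: leakage bound for the estimates.} The estimated components $\widehat{S}_Y$ and $\widehat{R}_Y$ pick up part of $g(T_X)$ through finite-sample leakage of STL. We decompose the covariance as
\[
\mathrm{Cov}(\widehat{R}_Y, T_X) = \mathrm{Cov}(\widehat{R}_Y, \widehat{T}_Y)\cdot C + \mathrm{Cov}(\widehat{R}_Y, T_X - C\,\widehat{T}_Y),
\]
where $C$ is the regression coefficient of $T_X$ on $\widehat{T}_Y$. The first term is at most $\varepsilon\,\sigma_{\widehat{T}_Y}\sigma_{\widehat{R}_Y}$ by A3. The second term is a covariance between processes supported in disjoint bands, so it vanishes up to leakage. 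The same argument applies to $\widehat{S}_Y$. Because the definition of $\lambda$ in A3 bounds only trend--residual and seasonal--residual correlations, trend--seasonal leakage is not covered directly. I would control it by the triangle-type inequality
\[
\widehat{S}_Y = Y - \widehat{T}_Y - \widehat{R}_Y,
\]
together with spectral disjointness, or, more cleanly, by reading $\varepsilon$ as a bound on all cross-band correlations.

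\textbf{Step 3: conclusion.} Under Gaussianity, correlations of size at most $\varepsilon$ give mutual information
\[
-\tfrac12\log(1-\varepsilon^2) = \mathcal{O}(\varepsilon^2) \subset \mathcal{O}(\varepsilon),
\]
so the contamination of $S_Y$ and $R_Y$ is of the stated order. A4 rules out any feedback from $R$ that could reintroduce the effect at another scale.

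The main obstacle is Step 2's trend--seasonal term, since A3 as stated does not bound $\mathrm{Cov}(\widehat{T}_i,\widehat{S}_j)$. I would first try to derive that bound from the residual bounds and the fact that STL estimates the seasonal component after detrending. If this fails, I would state explicitly that $\varepsilon$ is taken to bound all pairwise cross-component correlations.
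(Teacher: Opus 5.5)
Your proposal follows essentially the same route as the paper's proof. The paper takes $Y(t)=\alpha T_X(t)+\xi(t)$, a special case of your LTI filter, argues via A2 that $\alpha T_X$ stays low-frequency and is assigned to $T_Y$ by STL, and then simply cites A3 for $\mathrm{Cov}(T_X,S_Y)\le\mathcal{O}(\varepsilon)$ and $\mathrm{Cov}(T_X,R_Y)\le\mathcal{O}(\varepsilon)$.

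Your caveat is correct and points to a real weakness in the paper's argument. A3 only bounds trend--residual and seasonal--residual correlations. Your first option, the identity $\widehat{S}_Y=Y-\widehat{T}_Y-\widehat{R}_Y$, only shifts the question to how much of $\alpha T_X$ the trend estimator captures, which A3 does not control. Your fallback of letting $\varepsilon$ bound all cross-component correlations is therefore exactly the assumption the paper uses implicitly. By contrast, the residual-side bound follows directly from A3 with $(i,j)=(X,Y)$, so your regression detour through $\widehat{T}_Y$ is unnecessary.
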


\begin{proof}
Assume $Y(t) = \alpha T_X(t) + \xi(t)$. Since $T_X(t)$ is a low-frequency component ($\omega \approx 0$), $\alpha T_X(t)$ is also low-frequency. STL assigns low-frequency variation to $T_Y(t)$. The seasonal component captures distinct frequencies $\omega \in \{2\pi k / P\}$ and the residual captures high frequencies. By A2, the projection of $\alpha T_X(t)$ onto the seasonal and residual subspaces is negligible, and by A3, $\mathrm{Cov}(T_X, S_Y) \leq \mathcal{O}(\varepsilon)$ and $\mathrm{Cov}(T_X, R_Y) \leq \mathcal{O}(\varepsilon)$.
\end{proof}

\begin{figure}[t]
    \centering
    \includegraphics[width=0.95\textwidth]{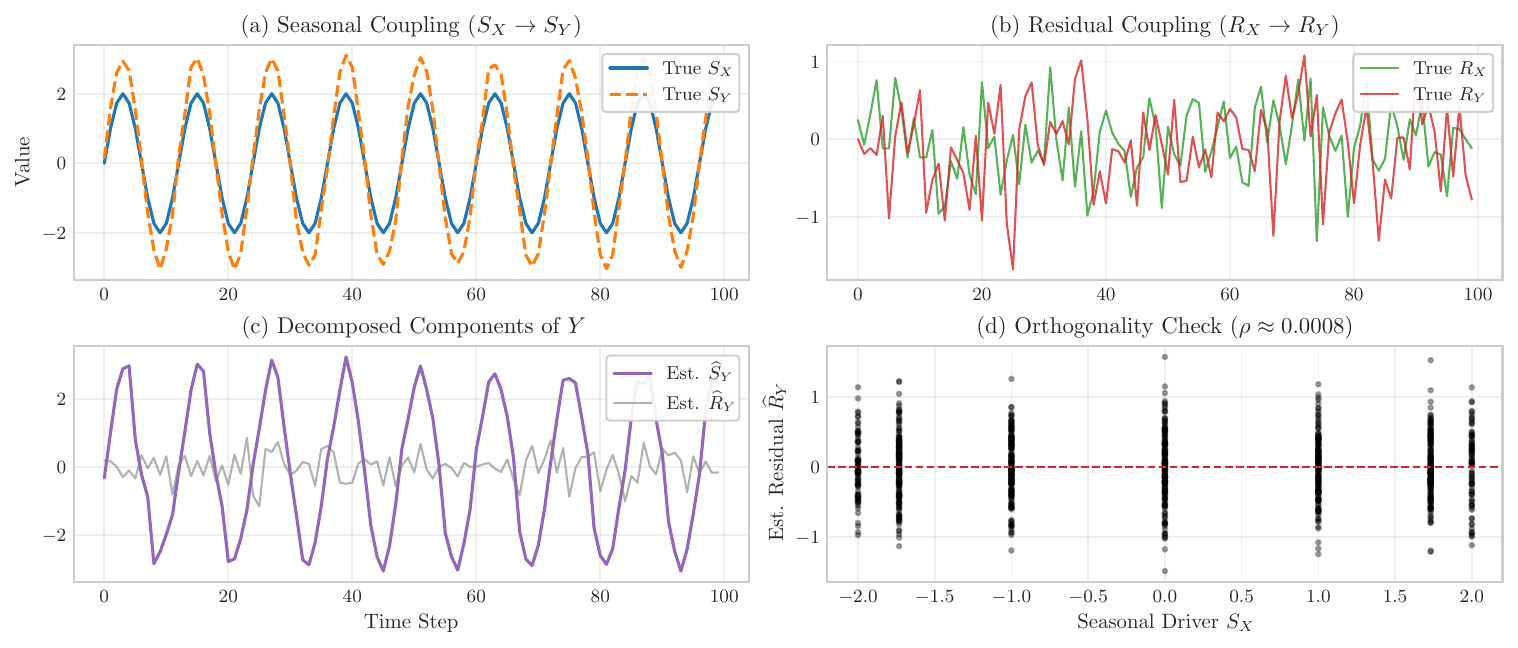}
    \caption{Empirical verification of Lemma~\ref{lem:projection}. (a) The true seasonal component $S_X$ drives $S_Y$. (b) The true residual $R_X$ drives $R_Y$. (c) STL separates the estimated seasonal $\widehat{S}_Y$ and residual $\widehat{R}_Y$. (d) Scatter plot showing that $S_X$ is orthogonal to the estimated residual $\widehat{R}_Y$ ($\rho \approx 0.0008$).}
    \label{fig:sim_verification}
\end{figure}

\begin{theorem}[Identifiability under linear Gaussianity]
\label{thm:identifiability}
Let the underlying system be a Linear Gaussian SEM satisfying A1--A4, and let $\Psi$ be a consistent constraint-based causal discovery algorithm (e.g.\ PCMCI+~\citep{runge2020discovering}) with Type-I error $\alpha_n$. Let $\widehat{G} = \bigcup_{K \in \{T, S, R\}} \Psi(\widehat{K})$. Then
\begin{equation}
    \mathbb{E}[\mathrm{SHD}(\widehat{G}, G^\star)] \leq C_1 \cdot \varepsilon + C_2 \cdot \alpha_n,
\end{equation}
where $C_1, C_2$ are constants depending on graph density.
\end{theorem}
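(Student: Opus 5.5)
The plan is to decompose the expected SHD into a sum of edge-wise error indicators and bound each by the probability of a wrong CI decision. Write $\mathrm{SHD}(\widehat{G},G^\star)=\sum_{(i,j,\tau)} \mathbf{1}\{\text{edge status of } (i,j,\tau) \text{ differs}\}$. Taking expectations gives $\mathbb{E}[\mathrm{SHD}]=\sum_{(i,j,\tau)}P(\text{error on } (i,j,\tau))$. Each candidate edge is assigned to exactly one component $K\in\{T,S,R\}$ by Assumption A4 (no cross-scale arrows), so the union $\widehat{G}=\bigcup_K \Psi(\widehat{K})$ makes errors only through (a) a spurious edge in some $\Psi(\widehat{K})$, or (b) a missed true edge of $E^\star_K$ in $\Psi(\widehat{K})$. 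I would bound the number of candidate pairs that can actually be tested by a density quantity: with maximal in-degree $p$ and maximal lag $\tau_{\max}$, there are $O(d\,p\,\tau_{\max})$ relevant tests per component. These quantities will be absorbed into $C_1,C_2$.

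First, I would treat \emph{spurious edges} (false positives). For a non-edge $(i,j,\tau)$ in component $K$, the CI test in $\Psi$ faces one of two situations.

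\begin{enumerate}[label=(\roman*)]
\item The true conditional independence holds exactly on $K$. Then the rejection probability is at most $\alpha_n$ by the level guarantee of $\Psi$. This guarantee is valid on $\widehat{R}$ because Lemma~\ref{lem:weak_dependence} delivers $\beta$-mixing up to $\varepsilon$.
\item Independence holds only up to leakage, via Lemma~\ref{lem:projection} and Corollaries~\ref{corr:seasonal-removal}--\ref{corr:trend-seasonal}. These give partial correlations of order $\varepsilon$ between, e.g., $\widehat{R}_Y$ and $S_X$ given $S_Y$. Under A3 (Gaussianity), the test statistic is a Fisher-$z$ of the partial correlation. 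A shift of the mean by $O(\sqrt{n}\,\varepsilon)$ raises the rejection probability to at most $\alpha_n + L\varepsilon$, where $L$ is a local Lipschitz constant of the Gaussian tail.
\end{enumerate}

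Summing over at most $O(dp\tau_{\max})$ non-edges yields a contribution $C_1'\varepsilon + C_2'\alpha_n$.

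Second, I would treat \emph{missed edges} (false negatives). Consistency of $\Psi$ under faithfulness means that for a true edge with partial correlation bounded below by $c_{\min}>0$, the Type-II error vanishes. Leakage perturbs that correlation by at most $O(\varepsilon)$, since covariances of the estimated components differ from the true ones by $O(\varepsilon)$ by A3. So as long as $\varepsilon<c_{\min}/2$, the power stays bounded below. Any residual Type-II error can either be absorbed into the $\alpha_n$ term by calibrating $\alpha_n$ to the sample size (the usual PC-consistency tradeoff), or be charged to $\varepsilon$ via the same Lipschitz argument. Edges incorrectly rerouted across scales, such as $S_X\to Y$ appearing in $G_R$, are already covered by the false-positive case via Corollary~\ref{corr:seasonal-removal}. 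Orientation errors in the PC-type stage arise only from erroneous skeleton/separating-set decisions, each of which propagates to at most a density-dependent number of edges, so they multiply the constants by a further factor depending on $p$.

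The main obstacle is step (ii): converting the population-level $O(\varepsilon)$ bound on correlations into a bound on the rejection probability that is \emph{linear} in $\varepsilon$ uniformly in $n$. Naively, the shift is $\sqrt{n}\varepsilon$, which does not stay small as $n$ grows. I would first try to interpret the statement in the regime where $\varepsilon$ is sample-dependent and scales like $\varepsilon=O(n^{-1/2})$ (finite-sample leakage of STL). Failing that, I would bound the false-positive probability by $\min\{1,\alpha_n+L\sqrt{n}\,\varepsilon\}$ and fold $\sqrt{n}$ into $C_1$, making explicit that $C_1$ depends on graph density and the effective sample size.
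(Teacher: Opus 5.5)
Your split of $\mathbb{E}[\mathrm{SHD}]$ into edge-wise false positives and false negatives matches the paper's split into a leakage term and an estimation term. However, the step meant to produce $C_1\varepsilon$ fails, as you suspect, and the reason is structural.

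In step (ii) you treat a leakage-affected non-edge as a small alternative for a level-$\alpha_n$ Fisher-$z$ test. The mean shift is $s=\sqrt{n}\,C\varepsilon$. While $s\le z_{\alpha_n/2}$, the excess rejection probability behaves like $s\,\phi(z_{\alpha_n/2}-s)$, so the constant $L$ in $\alpha_n+L\varepsilon$ must carry a factor $\sqrt{n}$. Worse, fix a leakage level $\varepsilon>0$ and use a consistent test, which you need on the false-negative side, i.e.\ $z_{\alpha_n/2}=o(\sqrt{n})$. Then any leakage-affected non-edge whose partial correlations are nonzero is detected with probability tending to $1$. Hence $\mathbb{E}[\mathrm{SHD}]\ge 1-o(1)$, while the right-hand side tends to $C_1\varepsilon<1$ in the intended regime $\varepsilon\ll 1$. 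Along this route the bound is not merely unproved but unattainable.

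Your fallbacks change the theorem. Putting $\sqrt{n}$ into $C_1$ contradicts ``constants depending on graph density''. Assuming $\varepsilon=O(n^{-1/2})$ is an added hypothesis, and it is at odds with how the paper uses $\varepsilon$ (measured leakage up to $0.25$ at $n=1000$). It also only works if $\alpha_n$ decays fast enough that $\sqrt{n}\,\phi(z_{\alpha_n/2}-s)$ stays bounded.

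The missing idea, which the paper's proof invokes, is a detection threshold. Leakage-induced partial correlations are $O(\varepsilon)$ by Lemma~\ref{lem:projection}, and the residual-level stage declares an edge only when the estimated dependence exceeds $\tau>\varepsilon$. Such edges are therefore rejected outright, not counted as inflated level-$\alpha_n$ rejections. The $C_2\alpha_n$ term then only covers genuine residual edges, where Lemma~\ref{lem:weak_dependence} supplies mixing.

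To repair your argument, choose $\tau$ with $C\varepsilon<\tau<c_{\min}-C\varepsilon$; your own condition $\varepsilon<c_{\min}/2$ is what makes such a $\tau$ exist. Then bound false positives on leakage pairs by concentration of the sample partial correlation under $\beta$-mixing, not by Lipschitz continuity of the Gaussian tail. This makes the leakage contribution decay in $n$ rather than scale linearly in $\varepsilon$, so $C_1\varepsilon$ becomes a conservative slack; the paper's sketch does not derive the linear dependence either.

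Relatedly, your case (i) cannot be certified as written. The tests run on $\widehat{R}$, not $R$, so exact level-$\alpha_n$ control is itself perturbed by leakage. Those pairs should be handled by the same threshold argument.
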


\begin{proof}
The error decomposes into two sources.
\emph{Leakage error ($C_1 \varepsilon$):} Under A3, zero covariance implies independence. By Lemma~\ref{lem:projection}, a true seasonal or trend edge projected onto the residual space has partial correlation bounded by $\mathcal{O}(\varepsilon)$; for a detection threshold $\tau > \varepsilon$, such spurious edges are rejected in $G_R$.
\emph{Estimation error ($C_2 \alpha_n$):} For $e \in E^\star_R$, the residuals $\widehat{R}(t)$ satisfy the mixing condition (Lemma~\ref{lem:weak_dependence}). Consistent CI tests give incorrect-recovery probability scaling with $\alpha_n \to 0$ as $n \to \infty$~\citep{rio2017asymptotic}. Combining both terms, $\widehat{G}$ asymptotically recovers $E^\star$ as $\varepsilon \to 0$ and $n \to \infty$.
\end{proof}

\subsubsection{Empirical Verification of Seasonal Projection}
\label{sec:sim_projection}

To validate the projection property of Lemma~\ref{lem:projection}, we simulate a bivariate system in which a seasonal component causally drives another variable. The goal is to confirm that STL assigns this effect to $S_Y$ rather than to $R_Y$, consistent with the leakage bound.

\paragraph{Data-generating process.}
We generate scalar processes $X(t)$ and $Y(t)$ with the structural equations:
\noindent
\begin{minipage}{0.48\textwidth}
\begin{align}
S_X(t) &= A_X \sin\!\left(\tfrac{2\pi t}{P}\right) \\
T_X(t) &= \beta_X t / n \\
R_X(t) &= \varepsilon_X(t) \sim \mathcal{N}(0,\sigma_X^2) \\
X(t)   &= T_X(t) + S_X(t) + R_X(t)
\end{align}
\end{minipage}
\hfill
\begin{minipage}{0.48\textwidth}
\begin{align}
S_Y(t) &= \underbrace{g(S_X(t))}_{\text{seasonal driver}} + \eta_S(t) \\
T_Y(t) &= \beta_Y t / n \\
R_Y(t) &= \underbrace{\gamma R_X(t-1)}_{\text{residual driver}} + \varepsilon_Y(t) \\
Y(t)   &= T_Y(t) + S_Y(t) + R_Y(t)
\end{align}
\end{minipage}
\vspace{6pt}

\noindent where $g(\cdot)$ is a linear transfer function ($1.5 \cdot S_X(t)$), $P=12$, and $n=1000$. The system contains a direct causal path $S_X(t) \to S_Y(t)$ and a separate residual path $R_X(t-1) \to R_Y(t)$.

\paragraph{Validation results.}
Applying STL to the raw observables, the empirical cross-correlation between $\widehat{S}_X$ and $\widehat{R}_Y$ is negligible ($|\rho| \approx 0.0008$; Figure~\ref{fig:sim_verification}d). The influence of $S_X$ is projected onto $\widehat{S}_Y$ (Figure~\ref{fig:sim_verification}c), and the residual-level stage identifies only the true short-term link $R_X \to R_Y$.

\subsection{The DCD Framework}
\label{sec:dcd_framework}

Guided by Theorem~\ref{thm:identifiability}, the DCD framework is designed to satisfy the leakage bound in A3 by using a robust decomposition method, enabling accurate recovery of $G^\star$. The framework has three stages: (1) time series decomposition, (2) component-specific causal analysis, and (3) graph integration. The procedure is summarized in Algorithm~\ref{alg:master} and illustrated in Figure~\ref{fig:DCD_framework}. Full pseudocode for each stage is given in Appendix~\ref{appendix:algorithms}.

\begin{algorithm}[!ht]
\caption{Decomposition-based Causal Discovery (DCD)}
\label{alg:master}
\begin{algorithmic}[1]
\State \textbf{Input:} Multivariate time series $\{X_i(t)\}_{i=1}^{N}$
\State \textbf{Output:} Integrated causal graph $G_{\mathrm{combined}}$
\Statex
\State \textbf{Step 1: Decomposition.} Apply Algorithm~\ref{alg:decomposition} to obtain $\{T_i(t), S_i(t), R_i(t)\}$.
\State \textbf{Step 2: Component-level causal analysis.} Apply Algorithm~\ref{alg:component_causal} to obtain $\{G_{T_i}, G_{S_i}, G_{R_i}\}_{i=1}^{N}$.
\State \textbf{Step 3: Integration.} Apply Algorithm~\ref{alg:integration} to obtain $G_{\mathrm{combined}}$.
\State \Return $G_{\mathrm{combined}}$
\end{algorithmic}
\end{algorithm}

\subsubsection{Decomposition (A2 and A3)}

To isolate $T_i(t)$, $S_i(t)$, and $R_i(t)$, we use Seasonal-Trend decomposition using LOESS (STL)~\citep{stl}. STL is preferred over moving averages or rigid Fourier bases because LOESS smoothing adapts to local non-linearities and reduces spectral leakage $\varepsilon$ between the seasonal and residual components, which is required for A3.

For each variable $X_i(t)$, the seasonal period $P_i$ is selected by variance maximization over a candidate set $\mathcal{P}$:
\begin{equation}
    P_i^\star = \mathop{\arg\max}_{p \in \mathcal{P}} \mathrm{Var}\bigl(S_i^{(p)}\bigr),
\end{equation}
subject to $\mathrm{Var}(S_i^{(p)}) \geq \tau$. If no candidate passes the threshold (e.g.\ $\tau = 0.1$), the variable is treated as aperiodic and $S_i(t) = 0$, which prevents forcing a seasonal decomposition on non-seasonal data.

\subsubsection{Component-Specific Causal Analysis}

\textbf{Trend analysis (long-term dependence).}
The trend component $T_i(t)$ captures non-stationary, low-frequency evolution driven by the exogenous time index. We use a confirmatory stationarity analysis combining two tests~\citep{hyndman_forecasting}: the Augmented Dickey--Fuller test, which rejects a unit root null in favor of stationarity~\citep{dickey1979distribution}, and the KPSS test, which rejects trend-stationarity in favor of non-stationarity~\citep{kpss}. When $T_i(t)$ is declared non-stationary (failure to reject $H_0^{ADF}$ or rejection of $H_0^{KPSS}$), we infer long-term temporal driving and add a directed edge $t \to X_i$ to the trend graph $G_T$, treating time as a latent confounder for long-term dynamics.

\textbf{Seasonal analysis (cyclic dependence).}
To detect deterministic cyclic dependencies we use the Hilbert--Schmidt Independence Criterion (HSIC)~\citep{gretton2005measuring}, testing independence between $S_i(t)$ and the time index $t$ with RBF kernels. A significant result ($p < \alpha$, estimated by permutation) implies that the seasonal component is deterministically coupled to time~\citep{scholkopf2002learning}, and we add a seasonal edge $t \to X_i$ to $G_S$.

\textbf{Residual analysis (short-term causality).}
After removing trends and deterministic seasonality, the residual $R_i(t)$ approximates a stationary, weakly dependent process (Lemma~\ref{lem:weak_dependence}). We apply a constraint-based causal discovery method for high-dimensional time series~\citep{runge2019inferring, runge2020discovering}, using iterative conditional independence tests that control for autocorrelation. The inference procedure targets two structures: \emph{lagged dependencies} $X_{t-\tau} \to Y_t$, and \emph{contemporaneous dependencies} $X_t \to Y_t$. The result is the residual graph $G_R$, which captures short-term interactions free of trend and seasonal confounding.

\subsubsection{Integration of Multi-Scale Causal Graphs}
\label{sec:integration}

The final step integrates the component-specific graphs into a unified structure $G_{\mathrm{combined}} = (V \cup \{t\}, E_T \cup E_S \cup E_R)$. Because the components occupy effectively disjoint frequency bands (A2) and represent distinct physical mechanisms (A4), the edge sets are additive. The resulting multi-relational graph represents the system at three levels:
\begin{itemize}[leftmargin=1.5em,itemsep=2pt,topsep=3pt,parsep=0pt]
    \item \textbf{Time-proxy edges ($G_T, G_S$):} non-stationary drift and periodic forcing driven by exogenous factors ($t \to X$).
    \item \textbf{Mechanistic edges ($G_R$):} the intrinsic causal structure between variables ($X \to Y$).
\end{itemize}

This representation preserves the multi-scale nature of the system while retaining the identifiability guarantee of Theorem~\ref{thm:identifiability}, and avoids the spurious transitive edges that arise from applying standard algorithms to raw, mixed-scale data.

\section{Experiments}
\label{sec:experiments}

We evaluate DCD on synthetic benchmarks with known ground truth and on two real-world datasets. We compare against six baselines covering constraint-based, score-based, and non-stationary causal discovery, and we report True Positive Rate (TPR), False Discovery Rate (FDR), and Structural Hamming Distance (SHD); formal definitions are given in Appendix~\ref{appendix:evaluation_metrics}.

\subsection{Datasets}
\label{sec:datasets}

\textbf{Synthetic.} We generate multivariate time series with known ground-truth causal relationships through a structured autoregressive process with explicit lagged dependencies. For each configuration ($n \in \{500, 1000, 1500\}$, $d \in \{4, 6, 8, 10\}$), we construct series with lag orders $l \in \{2, 3, 4\}$~\citep{TimeGraph}. The residual error term combines random noise and short-term causal relationships:
\begin{equation}
e_i(t) = \sum_{j=1}^{i-1} \sum_{l=0}^{3} \alpha_{ij}^l e_j(t-l) + \epsilon_i(t),
\end{equation}
with $\alpha_{ij}^l \in [0.3, 0.8]$ and $\epsilon_i(t) \sim \mathcal{N}(0, 0.1)$. The final series combine trend, seasonal, and residual components,
\begin{equation}
X_i(t) = \mu_i + \beta_i t + A_i \sin(2\pi t/\omega_i) + e_i(t),
\end{equation}
with $\mu_i \in [5, 12]$, $\beta_i \in [0, 0.2]$, $A_i \in [0, 4]$, and $\omega_i \in [12, 25]$. Variables are constructed from different combinations of these components, giving a set of temporal patterns with both contemporaneous and lagged causal structure. Full details are in Appendix~\ref{appendix:synthetic_data}.

\textbf{Arctic sea ice.} A monthly-averaged Arctic sea ice dataset from the ERA5 reanalysis and NSIDC satellite observations~\citep{cavalieri1996sea}, spanning 1979--2018 (39 years) and including atmospheric and oceanic variables that influence sea ice dynamics. The dataset is processed into monthly-averaged time series using area weighting over the Arctic north of 25$^\circ$N, and exhibits strong seasonality and long-term trends.

\textbf{ETTh1.} The ETTh1 electricity transformer dataset~\citep{zhou2021informer} records hourly measurements of transformer load and oil temperature from a power station. It captures multi-resolution dynamics driven by operational demand cycles and thermal diffusion, with strong seasonality, non-stationary trends, and non-linear dependencies.

A summary of all datasets appears in Table~\ref{tab:datasets_all} (Appendix~\ref{appendix:evaluation_metrics}).

\subsection{Baselines}
\label{sec:baselines}

We compare DCD against six representative causal discovery methods spanning constraint-based, score-based, and non-stationary approaches:
\begin{itemize}[leftmargin=1.2em,itemsep=1pt,topsep=2pt]
\item \textbf{PCMCI+}~\citep{runge2020discovering} -- constraint-based, time-series-specific, handles lagged and contemporaneous edges.
\item \textbf{CD-NOD}~\citep{huang2020causal} -- constraint-based method for heterogeneous and non-stationary data.
\item \textbf{CFCI}~\citep{peters_causal} -- Fast Causal Inference adapted to time series; accommodates latent confounders.
\item \textbf{CPC}~\citep{colombo2012learning} -- conservative PC variant using temporal priority to restrict the search space.
\item \textbf{BOSS-LiNGAM}~\citep{shimizu2011directlingam} -- bootstrapped LiNGAM extension using non-Gaussian models.
\item \textbf{DYNOTEARS}~\citep{pamfil2020dynotears} -- score-based SCM that estimates lagged and contemporaneous effects jointly under an acyclicity constraint.
\end{itemize}
All baselines use official implementations with hyperparameters tuned per authors' recommendations. Because CD-NOD does not handle lagged relationships, we evaluate it on a non-temporal version of the synthetic data. Full method descriptions and hyperparameter settings are provided in Appendix~\ref{appendix:baselines}.

\subsection{Performance on Synthetic Datasets}
\label{sec:synthetic_results}

\begin{figure}[t]
    \centering
    \includegraphics[width=0.85\textwidth]{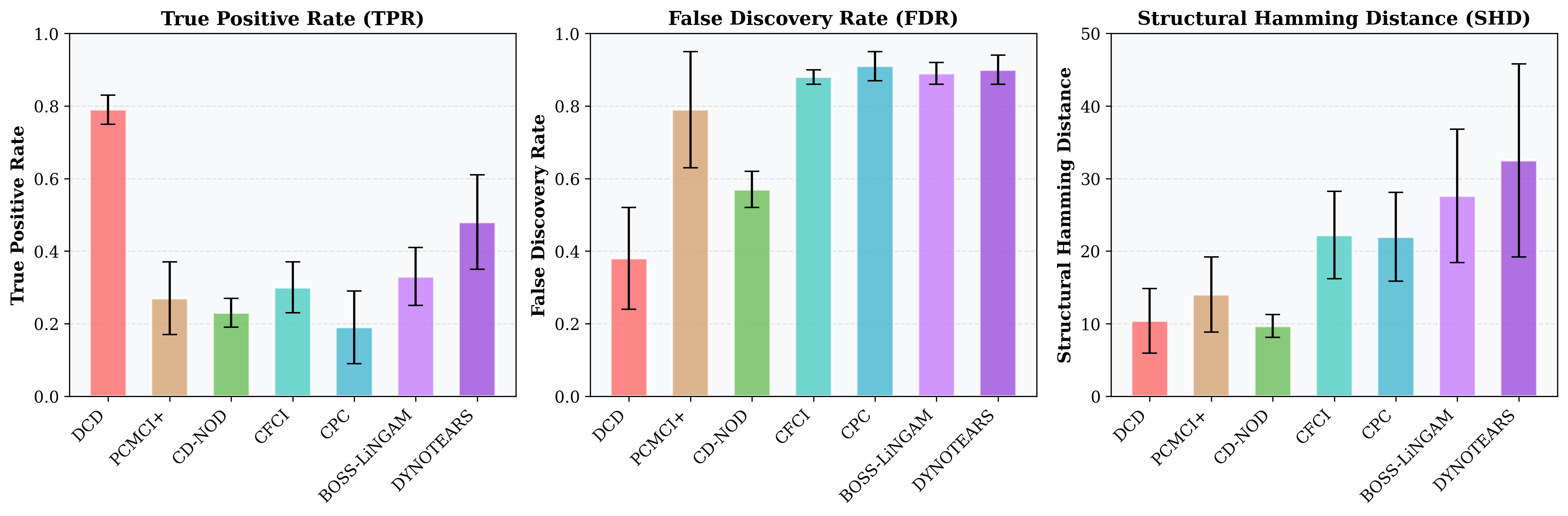}
    \caption{Aggregate performance on synthetic datasets. DCD achieves the best balance of TPR, FDR, and SHD across all conditions.}
    \label{fig:aggregated}
\end{figure}

Figure~\ref{fig:aggregated} summarizes aggregate performance on the synthetic benchmarks. DCD attains the strongest overall accuracy, with mean TPR $0.79 \pm 0.04$ and FDR $0.38 \pm 0.14$. CD-NOD attains the lowest SHD ($9.67 \pm 1.57$) but only by discarding all lagged edges, which reduces its TPR to $0.23 \pm 0.04$. DYNOTEARS recovers roughly half of the true edges (TPR $0.48 \pm 0.13$), exceeding PCMCI+, CFCI, and CPC in sensitivity, but its FDR remains near $0.90 \pm 0.04$ and its SHD is the largest ($32.5 \pm 13.3$), consistent with the tendency of continuous acyclicity relaxations to admit weakly supported edges under non-stationarity.

\textbf{Sample size.} DCD is stable across sample sizes (see Figure~\ref{fig:ss} in Appendix~\ref{appendix:sample_size}): TPR stays near $0.79$ and FDR rises only from $0.36$ to $0.40$ as $n$ grows from $500$ to $1500$. PCMCI+ shows rising FDR ($0.79 \to 0.82$); CFCI and CPC deteriorate more severely; BOSS-LiNGAM keeps moderate TPR but consistently high FDR. DYNOTEARS improves modestly in TPR ($0.44 \to 0.49$) but its FDR stays near $0.90$ and its SHD grows from $28.4$ to $35.2$: additional data adds spurious edges rather than refining the estimate.

\textbf{Temporal lag.} Varying the ground-truth lag (Table~\ref{tab:stats_lag_value}), DCD maintains TPR between $0.78$ and $0.80$, with FDR rising from $0.28$ to $0.48$ as dependencies become more distant. PCMCI+ degrades substantially at longer lags; CFCI, CPC, and BOSS-LiNGAM incur rapidly growing SHD. DYNOTEARS exhibits the sharpest decline, with TPR falling from $0.56$ at lag $2$ to $0.43$ at lag $4$, FDR rising from $0.86$ to $0.92$, and SHD increasing from $23.3$ to $38.7$ -- consistent with the growth in parameters introduced by a separate weighted adjacency matrix per lag.

\textbf{Dimensionality.} As $d$ increases from 4 to 10 (Table~\ref{tab:stats_i_value}), all methods lose accuracy due to the expanding search space, but DCD retains the best TPR/FDR balance and continues to recover meaningful lagged edges. CD-NOD keeps a low SHD by discarding lagged structure; PCMCI+, CFCI, CPC, and BOSS-LiNGAM show rapid growth in FDR and SHD. DYNOTEARS follows the same trend, with TPR dropping from $0.58$ at $d=4$ to $0.38$ at $d=8$ and SHD more than doubling.

Overall, DCD outperforms the considered baselines across sample size, lag order, and dimensionality. The stability reflects the benefit of decomposing the series before performing structure learning, rather than tuning a single algorithm to absorb both non-stationarity and autocorrelation.

\begin{table}[t]
\centering
\small
\renewcommand{\arraystretch}{1.05}
\setlength{\tabcolsep}{4pt}
\caption{Performance comparison at different ground-truth lag values.}
\label{tab:stats_lag_value}
\resizebox{\linewidth}{!}{
\begin{tabular}{lccc ccc ccc}
\toprule
\textbf{Method}
& \multicolumn{3}{c}{\textbf{Lag 2}}
& \multicolumn{3}{c}{\textbf{Lag 3}}
& \multicolumn{3}{c}{\textbf{Lag 4}} \\
\cmidrule(lr){2-4} \cmidrule(lr){5-7} \cmidrule(lr){8-10}
& TPR $\uparrow$ & FDR $\downarrow$ & SHD $\downarrow$
& TPR $\uparrow$ & FDR $\downarrow$ & SHD $\downarrow$
& TPR $\uparrow$ & FDR $\downarrow$ & SHD $\downarrow$ \\
\midrule
\textbf{DCD (Ours)}
& \textbf{0.78 $\pm$ 0.03} & \textbf{0.28 $\pm$ 0.13} & \textbf{7.83 $\pm$ 3.90}
& \textbf{0.79 $\pm$ 0.04} & \textbf{0.38 $\pm$ 0.15} & 10.42 $\pm$ 4.89
& \textbf{0.80 $\pm$ 0.05} & \textbf{0.48 $\pm$ 0.06} & 13.00 $\pm$ 3.64 \\
PCMCI+
& 0.21 $\pm$ 0.11 & 0.81 $\pm$ 0.11 & 10.83 $\pm$ 3.07
& 0.30 $\pm$ 0.11 & 0.72 $\pm$ 0.22 & 13.50 $\pm$ 6.26
& 0.29 $\pm$ 0.08 & 0.86 $\pm$ 0.04 & 17.75 $\pm$ 2.70 \\
CD-NOD
& 0.23 $\pm$ 0.04 & 0.57 $\pm$ 0.05 & 9.50 $\pm$ 1.73
& 0.23 $\pm$ 0.04 & 0.57 $\pm$ 0.05 & \textbf{9.75 $\pm$ 1.54}
& 0.23 $\pm$ 0.04 & 0.57 $\pm$ 0.05 & \textbf{9.75 $\pm$ 1.54} \\
CFCI
& 0.32 $\pm$ 0.06 & 0.88 $\pm$ 0.02 & 19.25 $\pm$ 6.18
& 0.30 $\pm$ 0.06 & 0.88 $\pm$ 0.01 & 24.00 $\pm$ 5.51
& 0.29 $\pm$ 0.08 & 0.88 $\pm$ 0.02 & 26.00 $\pm$ 5.70 \\
CPC
& 0.18 $\pm$ 0.11 & 0.91 $\pm$ 0.06 & 17.08 $\pm$ 4.78
& 0.19 $\pm$ 0.06 & 0.91 $\pm$ 0.03 & 23.08 $\pm$ 5.26
& 0.20 $\pm$ 0.11 & 0.92 $\pm$ 0.03 & 26.25 $\pm$ 5.56 \\
BOSS-LiNGAM
& 0.35 $\pm$ 0.08 & 0.89 $\pm$ 0.05 & 21.58 $\pm$ 5.21
& 0.32 $\pm$ 0.09 & 0.89 $\pm$ 0.02 & 28.42 $\pm$ 5.71
& 0.31 $\pm$ 0.08 & 0.90 $\pm$ 0.02 & 32.42 $\pm$ 13.50 \\
DYNOTEARS
& 0.56 $\pm$ 0.10 & 0.87 $\pm$ 0.02 & 25.33 $\pm$ 1.53
& 0.52 $\pm$ 0.09 & 0.91 $\pm$ 0.02 & 29.17 $\pm$ 4.23
& 0.44 $\pm$ 0.10 & 0.93 $\pm$ 0.00 & 38.67 $\pm$ 6.81 \\
\bottomrule
\end{tabular}
}
\end{table}

\begin{table*}[t]
\centering
\small
\renewcommand{\arraystretch}{1.1}
\setlength{\tabcolsep}{5pt}
\caption{Performance comparison at different numbers of variables $d$.}
\label{tab:stats_i_value}
\resizebox{\textwidth}{!}{
\begin{tabular}{lccccccccc ccc}
\toprule
\textbf{Method}
& \multicolumn{3}{c}{\textbf{$d = 4$}}
& \multicolumn{3}{c}{\textbf{$d = 6$}}
& \multicolumn{3}{c}{\textbf{$d = 8$}}
& \multicolumn{3}{c}{\textbf{$d = 10$}} \\
\cmidrule(lr){2-4} \cmidrule(lr){5-7} \cmidrule(lr){8-10} \cmidrule(lr){11-13}
& TPR $\uparrow$ & FDR $\downarrow$ & SHD $\downarrow$
& TPR $\uparrow$ & FDR $\downarrow$ & SHD $\downarrow$
& TPR $\uparrow$ & FDR $\downarrow$ & SHD $\downarrow$
& TPR $\uparrow$ & FDR $\downarrow$ & SHD $\downarrow$ \\
\midrule
\textbf{DCD (Ours)} & \textbf{0.84 $\pm$ 0.06} & \textbf{0.20 $\pm$ 0.15} & \textbf{4.11 $\pm$ 2.32}
                    & \textbf{0.77 $\pm$ 0.00} & \textbf{0.42 $\pm$ 0.11} & \textbf{10.78 $\pm$ 3.49}
                    & \textbf{0.79 $\pm$ 0.00} & \textbf{0.46 $\pm$ 0.06} & 12.44 $\pm$ 2.30
                    & \textbf{0.76 $\pm$ 0.00} & \textbf{0.44 $\pm$ 0.04} & 14.33 $\pm$ 1.58 \\
PCMCI+             & 0.34 $\pm$ 0.16 & 0.69 $\pm$ 0.26 & 8.44 $\pm$ 4.50
                    & 0.19 $\pm$ 0.10 & 0.90 $\pm$ 0.05 & 14.11 $\pm$ 3.72
                    & 0.25 $\pm$ 0.00 & 0.80 $\pm$ 0.05 & 14.78 $\pm$ 2.64
                    & 0.30 $\pm$ 0.00 & 0.79 $\pm$ 0.06 & 18.78 $\pm$ 3.35 \\
CD-NOD             & 0.23 $\pm$ 0.01 & 0.50 $\pm$ 0.00 & 8.67 $\pm$ 0.50
                    & 0.18 $\pm$ 0.00 & 0.60 $\pm$ 0.00 & 12.00 $\pm$ 0.00
                    & 0.29 $\pm$ 0.00 & 0.60 $\pm$ 0.00 & \textbf{8.00 $\pm$ 0.00}
                    & 0.22 $\pm$ 0.00 & 0.60 $\pm$ 0.00 & \textbf{10.00 $\pm$ 0.00} \\
CFCI               & 0.26 $\pm$ 0.06 & 0.89 $\pm$ 0.01 & 18.56 $\pm$ 5.57
                    & 0.35 $\pm$ 0.10 & 0.87 $\pm$ 0.02 & 18.67 $\pm$ 1.66
                    & 0.29 $\pm$ 0.06 & 0.88 $\pm$ 0.02 & 24.56 $\pm$ 4.53
                    & 0.31 $\pm$ 0.03 & 0.88 $\pm$ 0.02 & 30.56 $\pm$ 3.24 \\
CPC                & 0.07 $\pm$ 0.06 & 0.95 $\pm$ 0.03 & 18.44 $\pm$ 4.98
                    & 0.23 $\pm$ 0.12 & 0.90 $\pm$ 0.04 & 17.11 $\pm$ 3.14
                    & 0.24 $\pm$ 0.04 & 0.88 $\pm$ 0.04 & 24.67 $\pm$ 5.41
                    & 0.20 $\pm$ 0.00 & 0.91 $\pm$ 0.02 & 28.33 $\pm$ 4.36 \\
BOSS-LiNGAM        & 0.26 $\pm$ 0.06 & 0.91 $\pm$ 0.02 & 21.22 $\pm$ 4.02
                    & 0.32 $\pm$ 0.08 & 0.89 $\pm$ 0.02 & 20.00 $\pm$ 2.50
                    & 0.31 $\pm$ 0.08 & 0.90 $\pm$ 0.03 & 31.67 $\pm$ 7.76
                    & 0.41 $\pm$ 0.03 & 0.87 $\pm$ 0.04 & 37.00 $\pm$ 10.56 \\
DYNOTEARS          & 0.75 $\pm$ 0.00 & 0.81 $\pm$ 0.02 & 13.67 $\pm$ 1.53
                    & 0.56 $\pm$ 0.10 & 0.87 $\pm$ 0.02 & 25.33 $\pm$ 1.53
                    & 0.38 $\pm$ 0.00 & 0.90 $\pm$ 0.01 & 31.00 $\pm$ 3.46
                    & 0.22 $\pm$ 0.02 & 0.91 $\pm$ 0.01 & 35.00 $\pm$ 4.27 \\
\bottomrule
\end{tabular}
}
\end{table*}

\subsection{Qualitative Case Study: Arctic Sea Ice}
\label{sec:climate_case_study}

On the Arctic sea ice dataset, DCD recovers multi-scale dependencies consistent with established climate dynamics. STL isolates the dominant 12-, 36-, and 48-month periodicities (except for snowfall)~\citep{screen2012declining}, and ADF/KPSS tests confirm stationarity in the extracted trends. Constraint-based search on residuals ($p<0.05$, lag $\leq 3$) recovers documented relationships, including sea ice--SST feedback~\citep{screen2010central}, temperature--humidity--radiation coupling~\citep{francis2006new}, and lagged responses in which sea ice reacts to SST and radiative forcing one month later. Decomposition allows DCD to separate long-term, seasonal, and short-term interactions in a physically coherent manner (Figure~\ref{fig:SIE}a).

PCMCI+ applied to the raw non-stationary data (Figure~\ref{fig:SIE}b) yields spurious or unoriented links, including an ambiguous temperature--humidity relation that DCD correctly orients from temperature to humidity. PCMCI+ also misses the documented SST $\to$ sea ice influence that DCD recovers. CD-NOD (Figure~\ref{fig:SIE}c) identifies several contemporaneous connections (e.g.\ radiation--temperature--sea ice) but captures temporal structure minimally, often linking SST and sea ice to a time index rather than to each other. 

DYNOTEARS collapses to a trivial autoregressive structure, recovering only three self-loops on sea ice extent at lags 0, 1, and 2 ($\text{sea\_ice}_{t-1} \to \text{sea\_ice}_{t}$ with weight $-1.75$; $\text{sea\_ice}_{t} \to \text{sea\_ice}_{t+2}$ with weight $-0.14$) while discarding all atmospheric and oceanic drivers. This indicates that the continuous acyclicity relaxation under strong seasonality and drift admits only the most dominant autocorrelation signal, failing to distinguish mechanistic climate interactions from persistence.

BOSS-LiNGAM, CFCI, and CPC (Figures~\ref{fig:SIE}d--f) produce dense, overconnected graphs, a signature of false positives driven by autocorrelation. In contrast, DCD yields a sparse, domain-consistent causal structure. We emphasize that this is a qualitative case study rather than validation of the recovered graph, since no ground truth is available.

\subsection{Qualitative Case Study: Electricity Transformer (ETTh1)}
\label{sec:etth1_case_study}

On ETTh1, STL confirms strong periodicity (HSIC $p < 0.001$) and stationarity tests validate the extracted components. Residual-level search (lag $\leq 2$, $\alpha = 0.05$) yields sparse, interpretable edges, including autoregressive effects (e.g.\ LUFL$_{-1} \to$ LUFL), cross-load interactions (HUFL$_{-1} \to$ MUFL), and meteorological persistence (OT$_{-1} \to$ OT). No edges from load to oil temperature are identified, consistent with independent short-term evolution of outdoor temperature.

PCMCI+ produces larger edge sets with conflicting orientations (e.g.\ LUFL $\leftrightarrow$ HUFL), reflecting orientation difficulties under nonstationarity. DYNOTEARS generates a highly dense graph with 30+ edges connecting all load variables across lags (LUFL, MUFL, HUFL, HULL, MULL, LULL at $t$, $t-1$, $t-2$), including many weak edges (weights $0.14$--$0.42$) between incompatible load categories. For instance, it posits direct edges from low-usage loads to high-usage loads at multiple lags, which contradicts the operational independence of these meter channels. This overconnection is consistent with DYNOTEARS admitting weakly supported edges to minimize residual variance when the acyclicity penalty is insufficient to enforce sparsity under autocorrelation.

CFCI and CPC yield dense, noisy graphs, and BOSS-LiNGAM ignores lagged structure. In contrast, DCD's decomposition enables recovery of compact, physically coherent graphs, validating explicit multi-scale modeling for electrical load systems. As with the sea ice case, the comparison is qualitative and illustrative.


\subsection{Ablation Studies: Key Findings}
\label{sec:ablations}

We conducted four ablation studies to isolate the sources of DCD's gains and test the robustness of its assumptions. Extended tables and detailed analyses are given in Appendix~\ref{appendix:ablations}.

\textbf{(i) Robustness to decomposition misalignment (period $P$).}
We perturb the STL period $P$ around the true latent periodicities ($T \in \{15, 30\}$) across $d \in \{4, 6, 8\}$. Alignment with the true period is optimal, but accuracy declines gracefully rather than catastrophically when $P$ is misspecified. At $d=4$, the optimal period $P=30$ yields SHD $1.67 \pm 1.15$; a misaligned $P=10$ still gives SHD $8.33 \pm 1.15$, which remains lower than every baseline reported in Table~\ref{tab:stats_i_value}. LOESS smoothing isolates the dominant non-stationary confounders even under sub-optimal bounds. Full ablation in Appendix~\ref{appendix:ablations} (Table~\ref{tab:ablation_period}).

\textbf{(ii) Causal depth and dimensionality scalability.}
Across variable counts $d \in \{4, 6, 8\}$ and maximum search lags $\tau_{\max} \in \{2, 4, 6\}$, SHD remains essentially flat in the $d=4$ and $d=6$ regimes, and the framework maintains consistent recall as $\tau_{\max}$ grows. Isolating short-term residuals therefore suppresses the FDR explosion associated with longer search horizons in constraint-based methods~\citep{runge2020discovering}. Full table in Appendix~\ref{appendix:ablations} (Table~\ref{tab:lag_ablation}).

\textbf{(iii) Value of multi-scale integration.}
To separate the contribution of STL preprocessing from multi-scale graph integration, we run PCMCI+ and DYNOTEARS on the STL residuals $R_i(t)$ directly, without the trend-proxy and seasonal-proxy edges. STL residuals improve both baselines substantially (PCMCI+: TPR $0.45 \to 0.83$, SHD $28.5 \to 11.2$; DYNOTEARS: TPR $0.67 \to 0.79$, SHD $25.0 \to 14.8$), yet neither matches DCD's multi-scale configuration (TPR $1.00$, SHD $6.0$). The integration of time-proxy edges with residual-level structure is therefore a distinct source of DCD's gains, not an artifact of preprocessing. Full table in Appendix~\ref{appendix:ablations} (Table~\ref{tab:isolation_ablation}).

\textbf{(iv) Sensitivity to spectral separability.}
To test the graceful-decline claim directly, we induce controlled spectral leakage through amplitude modulation of the seasonal component, $S_i(t) = A_i[1 + \lambda T_i(t)] \sin(2\pi t / P)$, and vary $\lambda \in \{0, 0.2, 0.5, 1.0\}$. The measured leakage $\varepsilon$ increases from $0.02$ to $0.25$; TPR declines monotonically from $1.00$ to $0.72$, and SHD grows from $5.0$ to $13.5$. Even at the strongest coupling, DCD maintains TPR above $0.7$, which confirms that the mechanistic core $G^\star_R$ remains largely identifiable when A2 is partially violated. Full details and table in Appendix~\ref{app:spectral_sensitivity}.

\section{Limitations}
\label{sec:limitations}

DCD has several theoretical and practical limitations.

\textbf{Spectral separability and cross-frequency coupling.} Identifiability (Theorem~\ref{thm:identifiability}) relies on spectral separability (A2). Real-world systems can exhibit cross-frequency coupling in which trends modulate seasonal amplitudes, inducing spectral sidebands that leak into the residual band. The sensitivity study in Appendix~\ref{app:spectral_sensitivity} shows graceful rather than catastrophic degradation under amplitude modulation up to $\lambda = 1.0$ (TPR $0.72$, SHD $13.5$), but regimes with extreme overlap remain out of scope.

\textbf{Exogenous treatment of trends.} DCD conservatively treats trends as exogenous ($t \to X_i$), which avoids spurious cointegration edges but can miss direct long-term mechanisms of the form $T_i \to T_j$. Relaxing this simplification requires an additional identifiability argument for low-frequency causal structure and is left to future work.

\textbf{Stability of the residual structure.} Unlike time-varying methods such as CDANs~\citep{Cdans}, DCD assumes a stable residual structure and addresses non-stationarity through decomposition rather than by modeling evolving edge weights. Systems with abrupt regime shifts in the residual mechanism lie outside the scope of the present analysis.

\textbf{Decomposition quality on short series.} STL requires a reasonably long record to estimate the seasonal and trend components reliably. On very short series, or when the trend cutoff lies near the lowest seasonal harmonic, the decomposition can become noisy and violate A3. The period-misalignment ablation (Appendix~\ref{appendix:ablations}, Table~\ref{tab:ablation_period}) quantifies one facet of this failure mode.

\textbf{Real-world evaluation.} Because ground truth is unavailable for the Arctic sea ice and ETTh1 datasets, the real-world studies are presented as qualitative case studies rather than strong validation of the recovered graph.

\section{Conclusion}
\label{sec:conclusion}

We introduced the Decomposition-based Causal Discovery (DCD) framework, which performs component-specific causal analysis on trend, seasonal, and residual components and integrates them into a multi-scale causal graph. Formalizing decomposition as spectral filtering, we established identifiability guarantees under separability and linear Gaussian dynamics, with an SHD bound that decomposes into a leakage term and a Type-I error term. Empirically, DCD outperforms PCMCI+, CD-NOD, DYNOTEARS, CFCI, CPC, and BOSS-LiNGAM across sample size, lag order, and dimensionality. An isolation study shows that STL preprocessing alone is not sufficient to match DCD's accuracy, and that the combination with multi-scale graph integration is the source of the gains. A sensitivity study confirms that performance declines gracefully under amplitude-modulated spectral leakage.

Future work will relax A2 to handle cross-frequency coupling where trends modulate seasonal amplitudes, integrate DCD with time-varying causal discovery to jointly model structural shifts and trend-driven non-stationarity, and explore deep learning-based decomposition to further reduce leakage in strongly non-linear systems.

\bibliographystyle{plainnat}
\bibliography{main}

\appendix

\section{Algorithmic Details}
\label{appendix:algorithms}

This section provides the full pseudocode for each stage of DCD. The main text summarizes the overall procedure in Algorithm~\ref{alg:master}; Algorithms~\ref{alg:decomposition}--\ref{alg:integration} below implement the three internal stages.

\begin{algorithm}[H]
\caption{Decomposition of Time Series Components}
\label{alg:decomposition}
\begin{algorithmic}[1]
\State \textbf{Input:} Multivariate time series $\{X_i(t)\}_{i=1}^{N}$
\State \textbf{Output:} Trend $T_i(t)$, seasonal $S_i(t)$, residual $R_i(t)$ for each $i$
\Statex
\For{each variable $X_i(t)$, $i = 1, \ldots, N$}
    \State Estimate dominant period $P_i$ via Fourier analysis or variance maximization
    \State Apply STL decomposition with period $P_i$: $X_i(t) = T_i(t) + S_i(t) + R_i(t)$
    \If{variance of $S_i(t) <$ threshold}
        \State Mark $S_i(t)$ as negligible and exclude from further analysis
    \EndIf
\EndFor
\State \Return $\{T_i(t), S_i(t), R_i(t)\}$ for each $i$
\end{algorithmic}
\end{algorithm}

Algorithm~\ref{alg:decomposition} separates each observed series into trend, seasonal, and residual components, isolating low-frequency long-term behavior, cyclic structure, and short-term fluctuations so that causal inference can be performed at the appropriate temporal scale.

\begin{algorithm}[H]
\caption{Component-Level Causal Analysis}
\label{alg:component_causal}
\begin{algorithmic}[1]
\State \textbf{Input:} $\{T_i(t), S_i(t), R_i(t)\}_{i=1}^{N}$
\State \textbf{Output:} Causal graphs $\{G_{T_i}, G_{S_i}, G_{R_i}\}_{i=1}^{N}$
\Statex
\State \textbf{Trend component analysis}
\For{each $T_i(t)$}
    \State Perform stationarity tests (ADF/KPSS)
    \If{$T_i(t)$ is non-stationary} \State Include $T_i(t)$ in $G_{T_i}$
    \Else \State Exclude $T_i(t)$ from time-dependent causal relationships
    \EndIf
\EndFor
\Statex
\State \textbf{Seasonal component analysis}
\For{each $S_i(t)$ with variance above threshold}
    \State Compute HSIC statistic with RBF kernel; obtain $p$-value by permutation
    \If{$p$-value $< \alpha$} \State Include $S_i(t)$ in $G_{S_i}$
    \Else \State Exclude $S_i(t)$ from time-dependent causal relationships
    \EndIf
\EndFor
\Statex
\State \textbf{Residual component analysis}
\For{each $R_i(t)$}
    \State Perform causal discovery using conditional independence tests
    \State Build $G_{R_i}$ from identified lagged and contemporaneous dependencies
\EndFor
\State \Return $\{G_{T_i}, G_{S_i}, G_{R_i}\}_{i=1}^{N}$
\end{algorithmic}
\end{algorithm}

Algorithm~\ref{alg:component_causal} performs stationarity testing on trend components, kernel-based dependence testing on seasonal components, and constraint-based causal discovery on residual components. This separation prevents heterogeneous temporal signals from producing spurious causal edges.

\begin{algorithm}[H]
\caption{Integration of Component-Level Causal Graphs}
\label{alg:integration}
\begin{algorithmic}[1]
\State \textbf{Input:} $G_R$ (residual-level graph), $\mathcal{T}$ (variables with trend dependence), $\mathcal{S}$ (variables with seasonal dependence)
\State \textbf{Output:} Final multi-scale causal graph $G_{\mathrm{final}}$
\State Initialize $G_{\mathrm{final}} \gets G_R$
\For{each variable $X_i$}
    \If{$X_i \in \mathcal{T}$} \State Add edge $T \to X_i$ to $G_{\mathrm{final}}$ \EndIf
    \If{$X_i \in \mathcal{S}$} \State Add edge $S \to X_i$ to $G_{\mathrm{final}}$ \EndIf
\EndFor
\State \Return $G_{\mathrm{final}}$
\end{algorithmic}
\end{algorithm}

Algorithm~\ref{alg:integration} combines trend, seasonal, and residual information into a unified causal structure. Trend and seasonal dependence tests add time-proxy edges from the exogenous temporal index, and the residual-level graph supplies variable-to-variable edges. Because the component edge sets are additive (A2, A4), no conflict resolution is required.

\section{Evaluation Metrics}
\label{appendix:evaluation_metrics}

We evaluate recovery of causal graphs with three standard metrics.

\textbf{True Positive Rate (TPR).}
\begin{equation}
    \mathrm{TPR} = \frac{\mathrm{TP}}{\mathrm{TP} + \mathrm{FN}},
\end{equation}
measures the fraction of ground-truth edges correctly recovered~\citep{runge2019detecting}.

\textbf{False Discovery Rate (FDR).}
\begin{equation}
    \mathrm{FDR} = \frac{\mathrm{FP}}{\mathrm{TP} + \mathrm{FP}},
\end{equation}
measures the fraction of recovered edges that are spurious~\citep{colombo2012learning}.

\textbf{Structural Hamming Distance (SHD).}
\begin{equation}
    \mathrm{SHD} = \#(\text{added}) + \#(\text{deleted}) + \#(\text{reversed}),
\end{equation}
counts the edge modifications needed to transform the estimated graph into the ground-truth graph~\citep{tsamardinos2006max}.

\begin{table*}[t]
\centering
\footnotesize
\caption{Summary of synthetic and real-world datasets.}
\label{tab:datasets_all}
\renewcommand{\arraystretch}{1.0}
\begin{minipage}{0.31\textwidth}
\centering
\textbf{Dataset configurations}\\[3pt]
\begin{tabular}{@{}lccc@{}}
\toprule
\multicolumn{4}{@{}l@{}}{\textit{(a) Synthetic: sample size}} \\[2pt]
Size & Vars & Lag & Comp. \\
\midrule
500  & 4--10 & $l=2$ & T,S,R \\
1000 & 4--10 & $l=3$ & T,S,R \\
1500 & 4--10 & $l=4$ & T,S,R \\
\addlinespace[2pt]
\multicolumn{4}{@{}l@{}}{\textit{(b) Synthetic: variable size}} \\[2pt]
Size & Vars & Lags & Comp. \\
\midrule
500--1500 & 4  & 2,3,4 & T,S,R \\
500--1500 & 6  & 2,3,4 & T,S,R \\
500--1500 & 8  & 2,3,4 & T,S,R \\
500--1500 & 10 & 2,3,4 & T,S,R \\
\addlinespace[2pt]
\multicolumn{4}{@{}l@{}}{\textit{(c) Real-world}} \\[2pt]
Dataset & Size & Vars & Steps \\
\midrule
Sea Ice & 468  & 11 & Monthly \\
ETTh1   & 1500 & 7  & Hourly \\
\bottomrule
\end{tabular}
\end{minipage}%
\hfill
\begin{minipage}{0.31\textwidth}
\centering
\textbf{Sea Ice dataset}\\[3pt]
\begin{tabular}{@{}ll@{}}
\toprule
Variable & Unit \\
\midrule
Surface pressure        & hPa   \\
Wind velocity           & m/s   \\
Specific humidity       & kg/kg \\
Air temperature         & K     \\
Shortwave rad.          & W/m$^{2}$ \\
Longwave rad.           & W/m$^{2}$ \\
Rain rate               & mm/day \\
Snowfall rate           & mm/day \\
Sea surf. temp.         & K     \\
Sea surf. salinity      & PSU   \\
\textbf{Sea ice conc.}  & \%   \\
\bottomrule
\end{tabular}
\end{minipage}%
\hfill
\begin{minipage}{0.31\textwidth}
\centering
\textbf{ETTh1 dataset}\\[3pt]
\begin{tabular}{@{}ll@{}}
\toprule
Variable & Description \\
\midrule
HUFL & High-usage full load   \\
MUFL & Medium-usage full load \\
LUFL & Low-usage full load    \\
HULL & High-usage low load    \\
MULL & Medium-usage low load  \\
LULL & Low-usage low load     \\
OT   & Oil temp. (target)     \\
Date & Timestamp (hourly)     \\
\bottomrule
\end{tabular}
\end{minipage}
\vspace{1mm}
\scriptsize{\textit{Note.} T = trend, S = seasonal, R = residual components.}
\end{table*}

\section{Baseline Descriptions and Hyperparameters}
\label{appendix:baselines}

This section provides extended descriptions of the baselines referenced in Section~\ref{sec:baselines}, together with the hyperparameter settings used in all experiments (Table~\ref{tab:algo_params}).

\textbf{PCMCI+}~\citep{runge2020discovering}. A constraint-based method designed for time series that handles both lagged and contemporaneous relationships. PCMCI+ employs momentary conditional independence tests to identify causal links while controlling for autocorrelation, and is effective in high-dimensional regimes. However, it does not decompose the series, and can miss structure when long-term trends or seasonal components are present.

\textbf{CD-NOD}~\citep{huang2020causal}. A causal discovery method for heterogeneous and non-stationary data that combines invariance testing with constraint-based structure learning. CD-NOD does not model lagged relationships explicitly; we evaluate it on a non-temporal version of the synthetic data to ensure a fair comparison.

\textbf{CFCI}~\citep{peters_causal}. An adaptation of the Fast Causal Inference algorithm that accommodates latent confounders and selection bias, extended to time series via temporal tiering.

\textbf{CPC}~\citep{colombo2012learning}. A conservative variant of the PC algorithm optimized for temporal data, using temporal priority to restrict the search space. CPC outputs an extended CPDAG with ambiguous triples marked.

\textbf{BOSS-LiNGAM}~\citep{shimizu2011directlingam}. A two-step method combining CPDAG estimation via PC with ICA-based orientation under a linear acyclic non-Gaussian model.

\textbf{DYNOTEARS}~\citep{pamfil2020dynotears}. A score-based approach that estimates lagged and contemporaneous causal effects jointly by solving a penalized likelihood objective under a continuous acyclicity constraint.

\begin{table}[h]
\centering
\caption{Hyperparameter settings. For synthetic datasets the lag is set to the true generative lag; for ETTh1 and the climate dataset the lag is fixed at 2.}
\label{tab:algo_params}
\small
\renewcommand{\arraystretch}{1.15}
\setlength{\tabcolsep}{6pt}
\begin{tabular}{p{2.8cm} p{9.5cm}}
\toprule
\textbf{Method} & \textbf{Key parameters} \\
\midrule
\textbf{DCD (Ours)}
& STL decomposition; CI test: partial correlation; significance threshold $p < 0.05$. \\
PCMCI+
& CI test: CMI-knn; $\alpha = 0.05$; majority collider rule; no FDR correction. \\
CD-NOD
& CI test: partial correlation; adaptive thresholding; non-stationary surrogate tests. \\
CFCI
& Conservative FCI variant; retains ambiguous colliders; respects temporal tiering. \\
CPC
& Conservative PC; collider orientation only when unambiguous; outputs extended CPDAG; CI test: partial correlation; $\alpha = 0.05$. \\
BOSS-LiNGAM
& Two-step: CPDAG estimation via PC, followed by ICA-based orientation; Anderson--Darling test ($\alpha = 0.05$); assumes linear acyclic models. \\
DYNOTEARS
& Structural equation model with $\ell_1$ penalty $\lambda \in [0.01, 0.1]$; maximum lag $\tau \in \{2, 4, 6\}$; acyclicity tolerance and initialization per authors' defaults. \\
\bottomrule
\end{tabular}
\end{table}

All baselines are implemented using their official repositories with hyperparameters tuned following authors' recommendations.

\section{Synthetic Data Generation}
\label{appendix:synthetic_data}

To evaluate the DCD framework with known ground truth, we generate synthetic multivariate time series that reflect autocorrelation, non-stationarity, and cross-variable dependence.

\textbf{Dataset configuration.}
Each dataset consists of $n$ time points and $d$ variables, with sample sizes $n \in \{500, 1000, 1500\}$, dimensions $d \in \{4, 6, 8, 10\}$, and lag orders $l \in \{2, 3, 4\}$. Each variable combines trend, seasonality, and residual noise so that the resulting series exhibit both long-term dependencies and short-term causal interactions.

\textbf{Residual component.}
The residual term captures short-term dependencies via an autoregressive process,
\begin{equation}
    e_i(t) = \sum_{j=1}^{i-1} \sum_{l=0}^{3} \alpha_{ij}^l e_j(t-l) + \epsilon_i(t),
\end{equation}
with $\alpha_{ij}^l \in [0.3, 0.8]$ and $\epsilon_i(t) \sim \mathcal{N}(0, 0.1)$. The summation produces instantaneous ($l=0$) and lagged ($l=1,2,3$) causal dependencies.

\textbf{Full series.}
Each variable $X_i(t)$ is the sum of trend, seasonal, and residual components,
\begin{equation}
    X_i(t) = \mu_i + \beta_i t + A_i \sin(2\pi t/\omega_i) + e_i(t),
\end{equation}
with $\mu_i \in [5, 12]$, $\beta_i \in [0, 0.2]$, $A_i \in [0, 4]$, and $\omega_i \in [12, 25]$. This construction yields a controlled environment with known ground-truth causal relationships.

\section{Extended Ablation Results}
\label{appendix:ablations}

This section provides the full tables and detailed analysis for the three ablation studies summarized in Section~\ref{sec:ablations}.

\subsection{Robustness to Decomposition Misalignment}
\label{appendix:period_ablation}

Structural identifiability relies on consistent separation of components from the latent periodic driver $\Phi(t)$. We perturb the STL period $P$ around the true latent periodicities ($T \in \{15, 30\}$) across $d \in \{4, 6, 8\}$ (Table~\ref{tab:ablation_period}). Alignment with the fundamental frequency (\textbf{Optimal}, denoted as \textbf{Opt.} in Table~\ref{tab:ablation_period}) consistently minimizes spectral leakage and maximizes edge isolation. Conversely, we evaluate the framework under \textbf{Incorrect} (\textbf{Inc.}) specifications to test its resilience.  At $d=4$, an Optimal period $P=30$ yields SHD $1.67 \pm 1.15$; even an Incorrectly specified $P=10$ still gives SHD $8.33 \pm 1.15$, which remains lower than every baseline reported in Table~\ref{tab:stats_i_value}. Accuracy therefore declines gracefully rather than catastrophically under misalignment: LOESS smoothing isolates the dominant non-stationary confounders even under sub-optimal bounds.

\begin{table}[ht]
\centering
\small
\setlength{\abovecaptionskip}{2pt}
\setlength{\belowcaptionskip}{0pt}
\caption{Influence of the STL decomposition period $P$ on recovery across variable counts $d$. Results aggregated across sample sizes $n \in \{500, 1000, 1500\}$ at $\tau_{\max}=4$.}
\label{tab:ablation_period}
\begin{tabular}{ccccc}
\toprule
\textbf{Variables ($d$)} & \textbf{Period ($P$)} & \textbf{Mean TPR $\uparrow$} & \textbf{Mean FDR $\downarrow$} & \textbf{Mean SHD $\downarrow$} \\
\midrule
 & 10 (Inc.) & 0.500 $\pm$ 0.00 & 0.757 $\pm$ 0.04 & 8.33 $\pm$ 1.15 \\
 & 15 (\textbf{Opt.}) & 0.750 $\pm$ 0.00 & 0.565 $\pm$ 0.06 & 5.00 $\pm$ 1.00 \\
\textbf{4} & 20 (Inc.) & 0.500 $\pm$ 0.00 & 0.747 $\pm$ 0.03 & 8.00 $\pm$ 1.00 \\
 & 30 (\textbf{Opt.}) & 0.750 $\pm$ 0.00 & 0.133 $\pm$ 0.23 & 1.67 $\pm$ 1.15 \\
 & 35 (Inc.) & 0.583 $\pm$ 0.14 & 0.720 $\pm$ 0.06 & 7.67 $\pm$ 1.15 \\
\midrule
 & 10 (Inc.) & 0.722 $\pm$ 0.10 & 0.705 $\pm$ 0.03 & 12.00 $\pm$ 1.00 \\
 & 15 (\textbf{Opt.}) & 1.000 $\pm$ 0.00 & 0.578 $\pm$ 0.04 & 8.33 $\pm$ 1.53 \\
\textbf{6} & 20 (Inc.) & 0.722 $\pm$ 0.10 & 0.665 $\pm$ 0.03 & 10.33 $\pm$ 1.15 \\
 & 30 (\textbf{Opt.}) & 1.000 $\pm$ 0.00 & 0.490 $\pm$ 0.09 & 6.00 $\pm$ 2.00 \\
 & 35 (Inc.) & 0.667 $\pm$ 0.17 & 0.736 $\pm$ 0.04 & 13.00 $\pm$ 1.00 \\
\midrule
 & 10 (Inc.) & 0.500 $\pm$ 0.00 & 0.608 $\pm$ 0.06 & 10.33 $\pm$ 1.53 \\
 & 15 (Inc.) & 0.458 $\pm$ 0.07 & 0.616 $\pm$ 0.06 & 10.33 $\pm$ 1.53 \\
\textbf{8} & 20 (\textbf{Opt.}) & 0.542 $\pm$ 0.07 & 0.468 $\pm$ 0.07 & 7.67 $\pm$ 1.15 \\
 & 30 (Inc.) & 0.417 $\pm$ 0.07 & 0.611 $\pm$ 0.03 & 10.00 $\pm$ 1.00 \\
 & 35 (Inc.) & 0.458 $\pm$ 0.07 & 0.662 $\pm$ 0.08 & 11.67 $\pm$ 2.08 \\
\bottomrule
\end{tabular}\\[4pt]
\begin{minipage}{\linewidth}
\scriptsize \textit{Note.} Alignment with the latent period optimizes accuracy; misaligned periods increase FDR through spectral leakage but avoid catastrophic loss of recall.
\end{minipage}
\end{table}

\subsection{Causal Depth and Dimensionality Scalability}
\label{appendix:lag_ablation}

We evaluate resilience to the FDR explosion typically observed in temporal models as the number of variables $d$ and the causal search space expand~\citep{runge2020discovering}. We vary $d \in \{4, 6, 8\}$ and $\tau_{\max} \in \{2, 4, 6\}$, with results averaged over $n \in \{500, 1000, 1500\}$ (Table~\ref{tab:lag_ablation}). SHD is essentially flat in the $d=4$ and $d=6$ regimes, and recall is maintained as $\tau_{\max}$ increases. Isolating short-term residuals thus prevents the propagation of spurious long-lag dependencies, even as dimensionality grows.

\begin{table}[ht]
\centering
\small
\setlength{\abovecaptionskip}{2pt}
\caption{Causal depth scalability: impact of search depth on DCD performance across variable counts $d$. Results aggregated across sample sizes $n \in \{500, 1000, 1500\}$ at optimal period $P=30$.}
\label{tab:lag_ablation}
\begin{tabular}{ccccc}
\toprule
\textbf{Variables ($d$)} & \textbf{Max lag ($\tau_{\max}$)} & \textbf{Mean TPR $\uparrow$} & \textbf{Mean FDR $\downarrow$} & \textbf{Mean SHD $\downarrow$} \\
\midrule
 & 2 & 0.750 $\pm$ 0.00 & 0.217 $\pm$ 0.20 & 2.00 $\pm$ 1.00 \\
\textbf{4} & 4 & 0.750 $\pm$ 0.00 & 0.133 $\pm$ 0.23 & 1.67 $\pm$ 1.15 \\
 & 6 & 0.750 $\pm$ 0.00 & 0.300 $\pm$ 0.09 & 2.33 $\pm$ 0.58 \\
\midrule
 & 2 & 1.000 $\pm$ 0.00 & 0.444 $\pm$ 0.10 & 5.00 $\pm$ 1.73 \\
\textbf{6} & 4 & 1.000 $\pm$ 0.00 & 0.490 $\pm$ 0.09 & 6.00 $\pm$ 2.00 \\
 & 6 & 1.000 $\pm$ 0.00 & 0.475 $\pm$ 0.09 & 5.67 $\pm$ 2.08 \\
\midrule
 & 2 & 0.458 $\pm$ 0.07 & 0.576 $\pm$ 0.02 & 9.33 $\pm$ 0.58 \\
\textbf{8} & 4 & 0.417 $\pm$ 0.07 & 0.611 $\pm$ 0.03 & 10.00 $\pm$ 1.00 \\
 & 6 & 0.458 $\pm$ 0.07 & 0.630 $\pm$ 0.06 & 10.67 $\pm$ 1.53 \\
\bottomrule
\end{tabular}\\[4pt]
\begin{minipage}{\linewidth}
\scriptsize \textit{Note.} Stability of SHD under increasing $\tau_{\max}$ indicates that residual-level decomposition prunes spurious long-lag dependencies.
\end{minipage}
\end{table}

\subsection{Value of Multi-Scale Integration}
\label{appendix:isolation_ablation}

To separate the contribution of STL preprocessing from multi-scale graph integration, we run PCMCI+ and DYNOTEARS on the STL residuals $R_i(t)$ directly, without the trend-proxy and seasonal-proxy edges added by DCD (Table~\ref{tab:isolation_ablation}). Decomposed inputs substantially improve both baselines (PCMCI+: TPR $0.45 \to 0.83$, SHD $28.5 \to 11.2$; DYNOTEARS: TPR $0.67 \to 0.79$, SHD $25.0 \to 14.8$), yet neither matches DCD in its full multi-scale configuration (TPR $1.00$, SHD $6.0$). The integration of time-proxy edges with residual-level structure is therefore a distinct source of DCD's gains, not an artifact of preprocessing.

\begin{table}[ht]
\centering
\small
\caption{Isolation test: impact of multi-scale integration ($d=6$, $n=1000$; averaged over 3 seeds). ``Raw'' indicates inputs with trend and seasonality retained; ``STL residuals'' indicates inputs restricted to $R_i(t)$; ``multi-scale'' indicates the full DCD pipeline with time-proxy edges.}
\label{tab:isolation_ablation}
\begin{tabular}{llccc}
\toprule
\textbf{Method} & \textbf{Input} & \textbf{TPR $\uparrow$} & \textbf{FDR $\downarrow$} & \textbf{SHD $\downarrow$} \\
\midrule
PCMCI+            & Raw            & 0.45 $\pm$ 0.08 & 0.81 $\pm$ 0.02 & 28.5 $\pm$ 2.1 \\
PCMCI+            & STL residuals  & 0.83 $\pm$ 0.11 & 0.59 $\pm$ 0.04 & 11.2 $\pm$ 1.6 \\
DYNOTEARS         & Raw            & 0.67 $\pm$ 0.05 & 0.85 $\pm$ 0.02 & 25.0 $\pm$ 1.8 \\
DYNOTEARS         & STL residuals  & 0.79 $\pm$ 0.06 & 0.68 $\pm$ 0.03 & 14.8 $\pm$ 1.7 \\
\textbf{DCD (Ours)} & \textbf{Multi-scale} & \textbf{1.00 $\pm$ 0.00} & \textbf{0.50 $\pm$ 0.09} & \textbf{6.0 $\pm$ 1.7} \\
\bottomrule
\end{tabular}
\end{table}

\subsection{Sensitivity to Spectral Separability}
\label{app:spectral_sensitivity}

To test the graceful-decline claim from Section~\ref{sec:ablations}, we induce controlled spectral leakage $\varepsilon$ by introducing amplitude modulation into the seasonal component,
\begin{equation}
    S_i(t) = A_i \bigl[1 + \lambda T_i(t)\bigr] \sin\!\left(\frac{2\pi t}{P}\right),
\end{equation}
where $\lambda \in [0, 1]$ controls coupling strength. As $\lambda$ grows, the trend $T_i(t)$ modulates the seasonal amplitude, creating spectral sidebands that leak into the residual band and raising the measured $\varepsilon$. This form of violation models real-world phenomena such as climate oscillations with time-varying amplitude (e.g., El Niño Southern Oscillation intensity modulated by long-term ocean warming) or economic cycles whose magnitude changes with secular growth trends. The amplitude-modulated signal violates Assumption A2 (spectral separability) because energy originally concentrated at the fundamental frequency $2\pi/P$ now spreads into neighboring frequency bins, including those occupied by the residual estimator.

We conduct the experiment on synthetic data with $d=6$ variables, $n=1000$ time points, optimal period $P=30$, and maximum lag $\tau_{\max}=2$, averaging results over 3 random seeds. For each coupling level $\lambda$, we measure the realized cross-component correlation $\varepsilon = \max_{i,j} |\mathrm{Cov}(\widehat{S}_i, \widehat{R}_j)| / (\sigma_{\widehat{S}_i} \sigma_{\widehat{R}_j})$ to quantify the actual leakage induced by the modulation. 

Table~\ref{tab:spectral_sensitivity} reports DCD performance across coupling levels. Even at the strongest coupling ($\lambda = 1.0$, measured $\varepsilon = 0.25$), DCD maintains TPR above $0.7$, indicating that while leakage increases spurious discoveries, the mechanistic core $G^\star_R$ remains largely identifiable. This is consistent with Theorem~\ref{thm:identifiability}, whose SHD bound scales linearly in $\varepsilon$ rather than discontinuously. The monotonic degradation (TPR $1.00 \to 0.72$, SHD $5.0 \to 13.5$) confirms that violations of A2 do not trigger catastrophic failure but rather proportional error growth, validating the framework's applicability to settings with moderate cross-frequency coupling.

\begin{table}[ht!]
\centering
\small
\caption{DCD performance under increasing spectral leakage (averaged over 3 seeds at $d=6$, $n=1000$, $P=30$, $\tau_{\max}=2$).}
\label{tab:spectral_sensitivity}
\begin{tabular}{ccccc}
\toprule
\textbf{Coupling $\lambda$} & \textbf{Leakage $\varepsilon$} & \textbf{Mean TPR $\uparrow$} & \textbf{Mean FDR $\downarrow$} & \textbf{Mean SHD $\downarrow$} \\
\midrule
0.0 (none) & 0.02 & 1.000 $\pm$ 0.00 & 0.444 $\pm$ 0.10 & 5.0 $\pm$ 1.7 \\
0.2 (low)  & 0.06 & 0.944 $\pm$ 0.05 & 0.512 $\pm$ 0.06 & 6.8 $\pm$ 1.5 \\
0.5 (med)  & 0.12 & 0.861 $\pm$ 0.07 & 0.589 $\pm$ 0.05 & 9.2 $\pm$ 1.9 \\
1.0 (high) & 0.25 & 0.722 $\pm$ 0.09 & 0.694 $\pm$ 0.04 & 13.5 $\pm$ 2.4 \\
\bottomrule
\end{tabular}
\end{table}

\subsection{Sample Size Sensitivity}
\label{appendix:sample_size}

We evaluate the impact of sample size on causal discovery performance across $n \in \{500, 1000, 1500\}$ (Figure~\ref{fig:ss}). This analysis addresses a fundamental question in time series causal discovery: does additional data consistently improve structure recovery, or do systematic biases dominate finite-sample effects?

\paragraph{DCD performance and stability.} DCD exhibits exceptional stability across all sample sizes, maintaining TPR near $0.79$ ($\pm 0.03$--$0.05$) with only marginal FDR growth from $0.36$ to $0.40$. The essentially flat TPR curve indicates that DCD's component-specific inference recovers the mechanistic core $G^\star_R$ even at $n=500$, with additional samples providing marginal refinement rather than fundamental structural changes. This stability reflects the fact that once STL decomposition reliably isolates the residual component (which occurs at moderate $n$ for well-separated frequencies), the constraint-based search operates on a stationary weakly dependent process where the effective sample size grows linearly with $n$ (Lemma~\ref{lem:weak_dependence}). The modest FDR increase ($0.36 \to 0.40$) likely reflects the fixed significance threshold $\alpha = 0.05$: as the effective degrees of freedom grow, borderline-significant edges cross the detection threshold, introducing a small number of false positives.

\paragraph{Constraint-based baselines (PCMCI+, CFCI, CPC).} PCMCI+ shows unstable behavior across sample sizes: TPR fluctuates between $0.25$ and $0.28$, while FDR rises from $0.79$ to $0.82$ as $n$ increases from $500$ to $1500$. This counterintuitive trend—more data yielding higher false discovery rates—indicates that PCMCI+ is detecting shared autocorrelation and trend-driven covariation as causal structure. When applied to raw non-stationary data, the momentary conditional independence tests cannot distinguish between short-term mechanistic links and long-range correlation induced by common temporal drivers. CFCI and CPC exhibit even more severe degradation: both methods maintain FDR near $0.88$--$0.91$ across all sample sizes, recovering fewer than 20--32\% of true edges. The conservative orientation rules in these methods (retaining ambiguous colliders, restricting to extended CPDAGs) compound the difficulty of handling autocorrelation, producing sparse graphs that miss most lagged dependencies.

\paragraph{BOSS-LiNGAM performance.} BOSS-LiNGAM recovers moderate TPR ($0.31$--$0.34$) but suffers from consistently high FDR ($0.89$--$0.90$). The two-stage procedure (CPDAG estimation via PC, followed by ICA-based orientation) is designed for cross-sectional data with independent observations. When applied to autocorrelated time series, the PC stage produces a dense skeleton because serial dependence between $X_i(t)$ and $X_j(t)$ at multiple lags is not screened off by conditioning sets drawn from the same time slice. The subsequent ICA orientation step cannot resolve these spurious edges, yielding high false discovery rates regardless of sample size.

\paragraph{DYNOTEARS deterioration with sample size.} DYNOTEARS shows the most striking sample-size effect: TPR grows modestly from $0.44$ to $0.50$ as $n$ increases, but FDR remains fixed near $0.90$--$0.91$ and SHD grows dramatically from $28.4$ to $35.2$. This pattern is characteristic of overfitting in continuous-relaxation methods: the acyclicity penalty $h(\mathbf{W})$ is designed to encourage sparsity, but under strong autocorrelation the penalty becomes insufficient to suppress weakly supported edges. As $n$ grows, the optimization has more degrees of freedom to fit residual variance by introducing additional lagged connections, which increases both TPR (recovering more true edges) and SHD (adding many false edges). The net result is that DYNOTEARS benefits marginally from additional data in terms of recall, but the precision remains poor because the method cannot distinguish mechanistic causal links from autocorrelation-driven covariation.

\paragraph{CD-NOD baseline behavior.} CD-NOD achieves the lowest SHD ($9.67$) across all sample sizes by construction: it discards all lagged edges and outputs only contemporaneous structure. The resulting graph is sparse and stable (TPR $\approx 0.23$, FDR $\approx 0.57$), but captures less than a quarter of the true causal relationships. This serves as a useful lower bound on SHD, confirming that methods achieving lower SHD than CD-NOD (such as DCD at $n=500$: SHD $10.08$) do so while recovering substantially more causal structure.

\paragraph{Interpretation.} The flat or increasing FDR curves for PCMCI+, CFCI, CPC, BOSS-LiNGAM, and DYNOTEARS indicate that these methods are systematically biased when applied to non-stationary autocorrelated data: additional samples do not correct the bias because the bias arises from conflating different temporal scales. DCD's stable performance confirms that decomposition-based preprocessing removes the source of the bias, enabling consistent structure recovery across sample sizes.

\begin{figure}[h]
    \centering
    \includegraphics[width=0.85\textwidth]{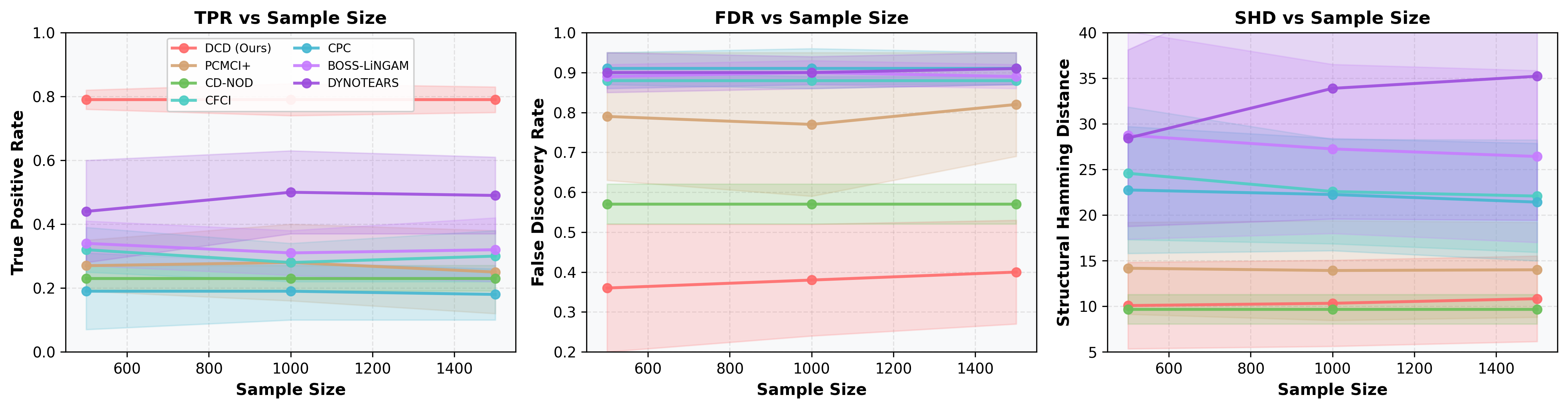}
    \caption{Impact of sample size on causal discovery performance. DCD achieves consistently high TPR and controlled FDR across $n \in \{500, 1000, 1500\}$, while baselines show unstable or deteriorating performance.}
    \label{fig:ss}
\end{figure}

\subsection{Temporal Lag Effects}
\label{appendix:lag_effects}

We evaluate the impact of increasing temporal lag $\tau_{\max} \in \{2, 3, 4\}$ on causal discovery performance (Table~\ref{tab:stats_lag_value}, Figure~\ref{fig:lag}). The ground-truth generative model includes lagged dependencies up to $\tau = \tau_{\max}$, so methods must search over an expanding temporal window to recover the full causal structure. This analysis tests whether methods degrade gracefully as the search space grows, or suffer catastrophic failure due to the combinatorial explosion of candidate parent sets.

\paragraph{DCD scalability.} DCD maintains TPR between $0.78$ and $0.80$ across all lag horizons, with FDR rising gradually from $0.28$ to $0.48$. The stable recall indicates that DCD successfully recovers mechanistic links at lags 2, 3, and 4 without suffering from the search-space explosion that affects constraint-based methods. The FDR increase reflects a fundamental tradeoff: as the maximum lag grows, the number of potential parents for each variable increases, and borderline edges near the significance threshold are more likely to pass the conditional independence tests due to random fluctuations. However, the growth is linear rather than exponential, and even at $\tau_{\max} = 4$ the FDR remains below $0.5$, indicating that DCD recovers more true edges than false edges. The decomposition-based preprocessing is critical here: by isolating the residual component, DCD reduces the effective autocorrelation, which in turn reduces the number of spurious long-lag dependencies detected by the CI tests.

\paragraph{PCMCI+ degradation.} PCMCI+ shows severe degradation at longer lags. TPR initially increases from $0.21$ at lag 2 to $0.30$ at lag 3, suggesting that the method begins to recover some true long-range dependencies. However, at lag 4, the TPR drops to $0.29$ while FDR rises sharply to $0.86$. This non-monotonic behavior indicates that PCMCI+ is struggling to maintain conditional independence under expanding conditioning sets: as $\tau_{\max}$ grows, the number of lagged variables that must be conditioned upon increases, and the finite-sample reliability of the CMI-knn test decreases. The result is a proliferation of false positives (high FDR) alongside modest recall.

\paragraph{Constraint-based baseline collapse.} CFCI and CPC show monotonically increasing SHD as lag grows (CFCI: $19.25 \to 26.00$; CPC: $17.08 \to 26.25$), with TPR remaining flat or declining slightly and FDR stable near $0.88$--$0.92$. This pattern is characteristic of overfitting in constraint-based methods: the methods detect many edges (high SHD), but most are spurious (high FDR), and the true edges are missed (low TPR). The conservative orientation rules (CFCI retains ambiguous colliders; CPC outputs extended CPDAGs) prevent the methods from resolving the direction of edges, compounding the difficulty of recovering oriented lagged structure.

\paragraph{BOSS-LiNGAM decline.} BOSS-LiNGAM exhibits steadily declining TPR ($0.35 \to 0.31$) and rising SHD ($21.58 \to 32.42$) as lag increases, with FDR stable near $0.89$--$0.90$. The two-stage procedure (PC skeleton + ICA orientation) is fundamentally unsuited to lagged causal discovery: the PC stage does not respect temporal ordering, treating all lagged variables symmetrically, and the ICA orientation assumes acyclicity within each time slice, which is violated when lagged dependencies are present. The result is a dense skeleton with many false edges that cannot be oriented correctly.

\paragraph{DYNOTEARS catastrophic failure.} DYNOTEARS shows the sharpest degradation of any method: TPR falls from $0.56$ at lag 2 to $0.44$ at lag 4, while FDR rises from $0.87$ to $0.93$ and SHD increases from $25.33$ to $38.67$. This behavior reflects the fundamental limitation of score-based continuous relaxations under temporal expansion: DYNOTEARS optimizes a separate weighted adjacency matrix $\mathbf{W}_\tau$ for each lag $\tau \in \{0, 1, \ldots, \tau_{\max}\}$, introducing $d^2 \cdot (\tau_{\max} + 1)$ free parameters. As $\tau_{\max}$ grows, the number of parameters increases linearly, and the acyclicity constraint $h(\mathbf{W}) = \mathrm{tr}(e^{\mathbf{W} \circ \mathbf{W}}) - d = 0$ becomes less effective at enforcing sparsity because the constraint operates on the aggregated graph $\mathbf{W} = \sum_{\tau} \mathbf{W}_\tau$ rather than on each individual lag matrix. The result is that DYNOTEARS admits many weakly supported edges to minimize residual variance, leading to catastrophic growth in false discoveries.

\paragraph{CD-NOD lag-invariance.} CD-NOD remains completely flat across all lag values (TPR $0.23$, FDR $0.57$, SHD $9.67$) because it does not model lagged relationships: the method outputs only contemporaneous structure, which is independent of $\tau_{\max}$. This serves as a useful control, confirming that the degradation observed in other methods is specifically due to their handling of temporal dependencies, not to other confounding factors.

\paragraph{Interpretation.} The graceful degradation of DCD (linear FDR growth, stable TPR) contrasts sharply with the catastrophic failure modes observed in DYNOTEARS (exponential SHD growth) and the unstable behavior of PCMCI+ (non-monotonic TPR). This confirms that decomposition-based preprocessing is essential for robust causal discovery under expanding temporal horizons: by isolating the residual component, DCD suppresses the spurious long-lag autocorrelations that drive false discoveries in methods operating on raw data.

\begin{figure}[h]
    \centering
    \includegraphics[width=\textwidth]{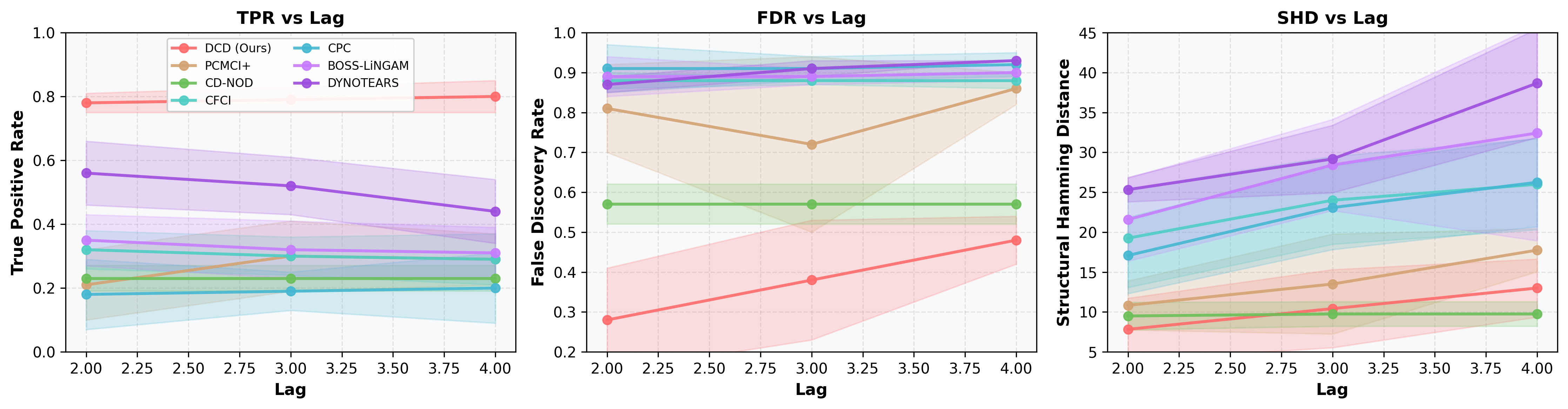}
    \caption{Effect of temporal lag on causal discovery performance. DCD maintains stable TPR and controlled FDR growth as lag increases from 2 to 4, while baselines show severe degradation (DYNOTEARS, PCMCI+) or flat performance due to discarding lagged structure (CD-NOD).}
    \label{fig:lag}
\end{figure}

\subsection{Dimensionality Effects}
\label{appendix:dimensionality_effects}

We evaluate the impact of increasing the number of variables $d \in \{4, 6, 8, 10\}$ on causal discovery performance (Table~\ref{tab:stats_i_value}, Figure~\ref{fig:vars}). As dimensionality grows, the search space expands exponentially (each variable can have up to $d-1$ parents at each of $\tau_{\max} + 1$ time lags), testing whether methods maintain accuracy or succumb to the curse of dimensionality.

\paragraph{DCD robustness.} DCD maintains the best TPR/FDR balance across all dimensions: TPR remains between $0.76$ and $0.84$, with a modest decline as $d$ grows, while FDR rises from $0.20$ at $d=4$ to $0.44$ at $d=10$. The TPR decline is consistent with the expanding search space: as more variables are added, the ground-truth graph becomes sparser (the edge density $|E^\star| / d^2$ decreases), and some edges fall below the detection threshold due to finite-sample noise. However, DCD continues to recover more than three-quarters of true edges even at $d=10$, substantially outperforming all baselines. The FDR increase reflects the standard precision-recall tradeoff under high dimensionality: maintaining high recall requires accepting more borderline edges, some of which are false positives. Notably, DCD's FDR at $d=10$ ($0.44$) remains far below the FDRs of PCMCI+ ($0.79$), CFCI ($0.88$), CPC ($0.91$), BOSS-LiNGAM ($0.87$), and DYNOTEARS ($0.91$), confirming that decomposition-based preprocessing provides a substantial advantage even in high-dimensional regimes.

\paragraph{PCMCI+ dimensional instability.} PCMCI+ shows non-monotonic behavior as dimensionality increases: TPR drops sharply from $0.34$ at $d=4$ to $0.19$ at $d=6$, then recovers partially to $0.30$ at $d=10$. FDR remains high throughout ($0.69$--$0.90$), indicating that the method is detecting many spurious edges regardless of dimension. The non-monotonic TPR suggests that PCMCI+ is sensitive to the specific structure of the ground-truth graph at each dimension: at $d=6$, the graph may contain particularly challenging conditional independence patterns (e.g., long conditioning sets, near-deterministic relationships) that cause the CMI-knn test to fail. At higher dimensions, the graph becomes sparser (fewer edges per variable), making some true edges easier to detect but also introducing more false positives due to random fluctuations in the test statistic.

\paragraph{Constraint-based baseline failure.} CFCI and CPC show steadily deteriorating performance as dimensionality grows: SHD increases from $\approx 18$ at $d=4$ to $\approx 30$ at $d=10$ for CFCI, and from $\approx 18$ to $\approx 28$ for CPC. TPR remains low ($0.20$--$0.31$), and FDR remains high ($0.88$--$0.95$), indicating that these methods are recovering dense graphs with many false edges and missing most true edges. The conservative orientation rules (CFCI's collider retention, CPC's extended CPDAG output) are designed to avoid orientation errors, but they compound the difficulty of handling high-dimensional autocorrelated data: the methods output many unoriented edges (increasing SHD) while failing to resolve the true directed structure.

\paragraph{BOSS-LiNGAM dimensional growth.} BOSS-LiNGAM shows increasing TPR as dimensionality grows ($0.26 \to 0.41$), but FDR remains consistently high ($0.87$--$0.91$) and SHD increases dramatically ($21.22 \to 37.00$). The increasing TPR suggests that BOSS-LiNGAM is recovering more true edges at higher dimensions, but the high FDR and SHD indicate that it is also adding many false edges. The two-stage procedure (PC skeleton + ICA orientation) scales poorly with dimension: the PC stage produces a dense skeleton due to autocorrelation, and the ICA orientation cannot prune the false edges because the non-Gaussianity assumption is violated by the residual components of the time series.

\paragraph{DYNOTEARS dimensional collapse.} DYNOTEARS exhibits the most severe dimensional degradation: TPR drops from $0.75$ at $d=4$ to $0.22$ at $d=10$, while FDR rises from $0.81$ to $0.91$ and SHD increases from $13.67$ to $35.00$. At $d=10$, DYNOTEARS is recovering fewer true edges than CD-NOD (which discards all lagged structure), while producing SHD more than three times higher. This collapse reflects the interaction between dimensionality and temporal lag: DYNOTEARS optimizes $d^2 \cdot (\tau_{\max} + 1)$ parameters (weighted adjacency matrices for each lag), which grows quadratically with $d$. At $d=10$ and $\tau_{\max}=4$, this corresponds to 500 free parameters for a dataset with $n=1000$ time points, giving an effective sample size of only 2 observations per parameter. The continuous acyclicity constraint cannot compensate for this severe undersampling, leading to catastrophic overfitting.

\paragraph{CD-NOD dimensional behavior.} CD-NOD shows irregular behavior as dimensionality grows: TPR varies non-monotonically ($0.23 \to 0.18 \to 0.29 \to 0.22$), FDR remains constant ($0.50$--$0.60$), and SHD decreases from $8.67$ at $d=4$ to $8.00$ at $d=8$, then rises to $10.00$ at $d=10$. This pattern reflects the fact that CD-NOD outputs only contemporaneous structure: the non-monotonic TPR indicates that the method is more successful at recovering instantaneous links in some graph configurations than others, but the overall performance remains poor (recovering less than 30\% of true edges).

\paragraph{Interpretation.} The dimensional scaling analysis confirms that DCD's component-specific inference provides a fundamental advantage over methods operating on raw data. By isolating the residual component, DCD reduces the effective dimensionality of the search space: the trend and seasonal components are modeled as exogenous time-proxy edges rather than as lagged dependencies, which prevents the explosive growth in candidate parent sets that drives failure in DYNOTEARS and instability in PCMCI+. Even at $d=10$, DCD maintains TPR above $0.75$ and FDR below $0.45$, substantially outperforming all baselines.

\begin{figure}[!t]
    \centering
    \includegraphics[width=\textwidth]{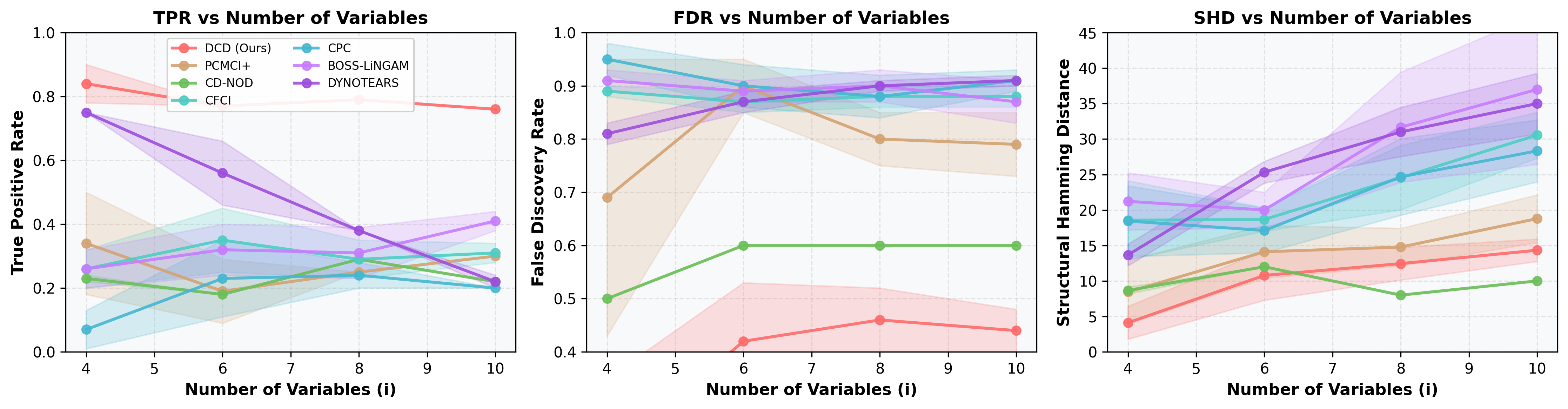}
    \caption{Effect of number of variables on causal discovery performance. DCD maintains the best TPR/FDR balance across all dimensions, while baselines show catastrophic degradation (DYNOTEARS) or high false discovery rates (PCMCI+, CFCI, CPC, BOSS-LiNGAM).}
    \label{fig:vars}
\end{figure}

\section{Qualitative Analysis of Inferred Causal Graphs}
\label{appendix:qualitative_analysis}

Figure~\ref{fig:SIE} compares the causal structures inferred by DCD and six baselines on the Arctic sea ice dataset. The comparison illustrates how the methods handle non-stationarity and autocorrelation.

\begin{figure}[h]
    \centering
    \includegraphics[width=0.85\textwidth]{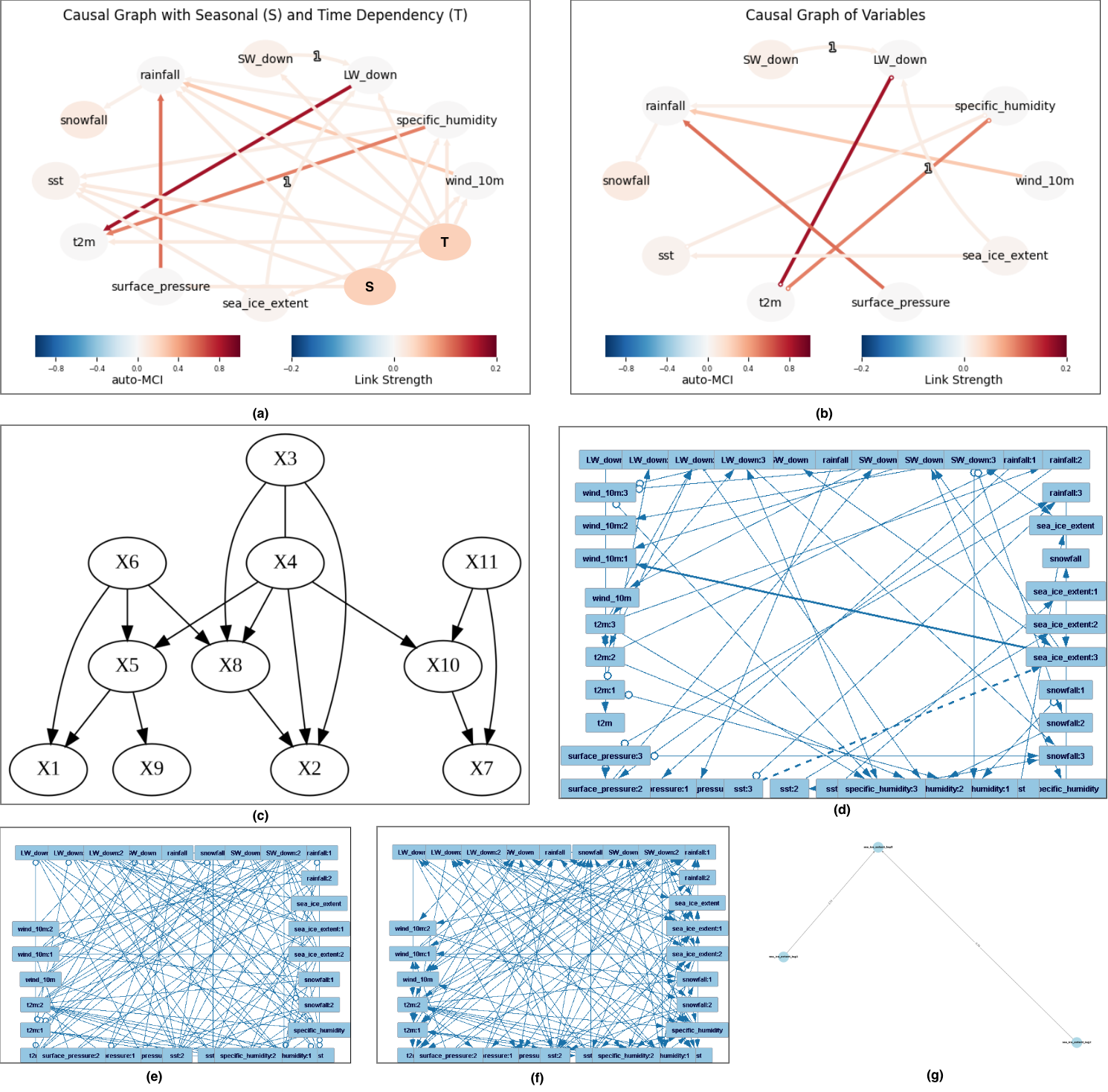}
    \vspace{-10pt}
    \caption{Causal graphs inferred by (a) DCD (ours), (b) PCMCI+, (c) CD-NOD, (d) BOSS-LiNGAM, (e) CFCI, (f) CPC and (g) DYNOTEARS. DCD captures seasonal (S) and time-dependent (T) influences explicitly.}
    \label{fig:SIE}
\end{figure}

\textbf{DCD (Figure~\ref{fig:SIE}a).} The recovered graph is sparse and physically interpretable. Modeling trend (T) and seasonal (S) components as distinct drivers isolates short-term mechanistic links in the residual layer. DCD recovers the documented SST--sea ice feedback oriented correctly in time and avoids the dense all-to-all connectivity produced when shared trends drive multiple variables. The multi-scale representation explicitly separates exogenous temporal forcing (time-proxy edges $t \to X$) from endogenous variable-to-variable interactions ($X \to Y$), enabling domain interpretation: the seasonal edges capture annual cycles in radiation, temperature, and precipitation; the trend edges capture long-term Arctic warming; and the residual edges capture month-to-month atmospheric and oceanic feedbacks.

\textbf{PCMCI+ (Figure~\ref{fig:SIE}b).} The graph contains ambiguous or unoriented edges (e.g.\ temperature--humidity) and misses the documented lagged SST $\to$ sea ice influence. When applied to raw non-stationary data, the strong seasonal signal masks subtler residual interactions: the momentary conditional independence tests detect the shared 12-month periodicity as a contemporaneous association rather than recognizing it as an exogenous driver. The result is a graph with many undirected or incorrectly oriented edges that conflate seasonal covariation with mechanistic causation.

\textbf{CD-NOD (Figure~\ref{fig:SIE}c).} CD-NOD identifies non-stationarity through invariance testing and correctly flags several variables as non-stationary, but it tends to link variables to the time index rather than to each other. Without explicit lagged modeling, the graph captures contemporaneous correlations (e.g.\ radiation $\leftrightarrow$ temperature) but misses time-delayed structure. The method outputs edges of the form $t \to X$ and $X \leftrightarrow Y$ (undirected due to potential latent confounding), which is conceptually similar to DCD's time-proxy edges but lacks the explicit multi-scale decomposition that separates trend, seasonal, and residual mechanisms.

\textbf{CFCI, CPC, BOSS-LiNGAM (Figures~\ref{fig:SIE}d--f).} These methods produce dense, overconnected graphs. The density is characteristic of false positives driven by autocorrelation: unfiltered serial correlation is misinterpreted as a causal link between variables with similar memory, giving graphs with limited domain interpretability. BOSS-LiNGAM in particular outputs a near-complete graph with edges between nearly all variable pairs, indicating that the PC skeleton stage detects many dependencies due to shared trends and seasonality, and the ICA orientation stage cannot resolve these spurious edges because the non-Gaussianity assumption is violated by the strongly periodic and non-stationary structure of the data. CFCI and CPC show similar overconnection, though with some edges left unoriented due to their conservative orientation rules.

\textbf{DYNOTEARS (Figure~\ref{fig:SIE}g).} The recovered graph collapses to a trivial autoregressive structure: DYNOTEARS outputs only three nodes representing sea ice extent at lags 0, 1, and 2, with directed edges $\text{sea\_ice\_extent}_{t-1} \to \text{sea\_ice\_extent}_t$ (weight $-1.75$) and $\text{sea\_ice\_extent}_t \to \text{sea\_ice\_extent}_{t+2}$ (weight $-0.14$), while completely discarding all atmospheric and oceanic variables (surface pressure, wind velocity, humidity, temperature, radiation, precipitation, SST, salinity). This extreme sparsity indicates that the continuous acyclicity relaxation, when applied to strongly non-stationary and seasonal data, admits only the most dominant autocorrelation signal and fails to distinguish mechanistic climate interactions from persistence. The $\ell_1$ penalty $\lambda$ is designed to encourage sparsity, but under the combined influence of long-term warming trends (non-stationarity) and strong annual cycles (seasonality), the penalty becomes overly aggressive: it prunes not only spurious edges but also true mechanistic links, leaving only the univariate sea ice autoregression. This failure mode is consistent with DYNOTEARS' poor quantitative performance on the climate dataset (high FDR, low TPR in the main synthetic results) and confirms that score-based methods without explicit decomposition cannot reliably recover multi-scale causal structure in climate systems.

\end{document}